\documentclass{article}

\usepackage{microtype}
\usepackage{graphicx}
\usepackage{subfigure}
\usepackage[table]{xcolor} 

\usepackage{pifont} 
\newcommand{\cmark}{\ding{51}}
\newcommand{\xmark}{\ding{55}}
\usepackage{hyperref}

\usepackage[accepted]{icml2025}

\usepackage{amsmath}
\usepackage{amssymb}
\usepackage{mathtools}
\usepackage{amsthm}
\usepackage{multirow}
\usepackage{booktabs} 
\usepackage[capitalize,noabbrev]{cleveref}

\theoremstyle{plain}
\newtheorem{theorem}{Theorem}[section]
\newtheorem{proposition}[theorem]{Proposition}

\theoremstyle{definition}

\theoremstyle{remark}

\usepackage[textsize=tiny]{todonotes}

\usepackage{amsmath} 

\icmltitlerunning{One Ring to Rule Them All: Unifying Group-Based RL via Dynamic Power-Mean Geometry}

\begin{document}

\twocolumn[
\icmltitle{One Ring to Rule Them All: Unifying Group-Based RL via Dynamic Power-Mean Geometry}



\icmlsetsymbol{equal}{*}
\icmlsetsymbol{corresponding author}{\star}

\begin{icmlauthorlist}
\icmlauthor{Weisong Zhao}{iie1,iie2,sangfor}
\icmlauthor{Tong Wang}{sangfor}
\icmlauthor{Zichang Tan}{sangfor}
\icmlauthor{Te Yang}{ia,ia2}
\icmlauthor{Siran Peng}{ia,ia2}
\icmlauthor{Haoyuan Zhang}{ia,ia2}
\icmlauthor{Tianshuo Zhang}{ia,ia2}
\icmlauthor{Haichao Shi}{iie1,iie2}
\icmlauthor{Meng Meng}{sangfor}
\icmlauthor{Yang Yang}{ia,ia2}
\icmlauthor{Xiangyu Zhu}{ia,ia2}
\icmlauthor{Zhen Lei}{ia,ia2,hk}
\icmlauthor{Xiao-Yu Zhang}{iie1,iie2}
\icmlauthor{Xu Zhou}{sangfor}
\end{icmlauthorlist}

\icmlaffiliation{iie1}{Institute of Information Engineering, Chinese Academy of Sciences}
\icmlaffiliation{iie2}{School of Cyber Security, University of Chinese Academy of Sciences}
\icmlaffiliation{sangfor}{Sangfor Technologies}
\icmlaffiliation{ia}{Institute of Automation, Chinese Academy of Sciences}
\icmlaffiliation{ia2}{School of Artificial Intelligence, University of Chinese Academy of Sciences}
\icmlaffiliation{hk}{CAIR, Hong Kong Institute of Science and Innovation, Chinese Academy
of Sciences}

\icmlcorrespondingauthor{Xu Zhou}{zhouxu@sangfor.com.cn}
\icmlcorrespondingauthor{Xiao-Yu Zhang}{zhangxiaoyu@iie.ac.cn}
\icmlcorrespondingauthor{Zhen Lei}{zhen.lei@ia.ac.cn}

\icmlkeywords{Machine Learning, ICML}

\vskip 0.3in
]



\printAffiliationsAndNotice{}  
\begin{abstract}
Group-based reinforcement learning has evolved from the arithmetic mean of GRPO to the geometric mean of GMPO.
While GMPO improves stability by constraining a conservative objective, it shares a fundamental limitation with GRPO: reliance on a \emph{fixed} aggregation geometry that ignores the evolving and heterogeneous nature of each trajectory.
In this work, we unify these approaches under \textit{Power-Mean Policy Optimization (PMPO)}, a generalized framework that parameterizes the aggregation geometry via the power-mean geometry exponent $p$.
Within this framework, GRPO and GMPO are recovered as special cases ($p=1$ and $p \to 0$). Theoretically, we demonstrate that adjusting $p$ modulates the concentration of gradient updates, effectively reweighting tokens based on their advantage contribution. To determine $p$ adaptively, we introduce a Clip-aware Effective Sample Size (ESS) mechanism. Specifically, we
propose a deterministic rule that maps a trajectory’s clipping fraction to a target ESS. Then, we
solve for the specific $p$ to align the trajectory’s induced ESS with this target one. This allows PMPO to dynamically transition between the aggressive arithmetic mean for reliable trajectories and the conservative geometric mean for unstable ones.
Experiments on multiple mathematical reasoning benchmarks demonstrate that PMPO outperforms strong baselines.
\end{abstract}

\section{Introduction}
Reinforcement learning from human or programmatic feedback (RLHF/RLAIF) has recently emerged as a powerful paradigm for eliciting strong mathematical reasoning capabilities from large language models (LLMs) \citep{deepseek,qwq,qwen3}.
A popular class of methods, typified by Group Relative Policy Optimization (GRPO) \citep{grpo}, optimizes token-level log-probabilities under a bandit-style reward framework.
GRPO adopts an arithmetic mean for token-level importance ratios but exhibits inherent pathologies in long chain-of-thought (CoT) reasoning: susceptibility to ``outlier'' tokens that destabilize importance sampling ratios.
Recent advancements, e.g., Geometric-Mean Policy Optimization (GMPO) \citep{gmpo} and GSPO \citep{gspo}, address these issues by replacing the arithmetic mean with a geometric mean.
This change provably dampens the influence of extreme ratios, effectively stabilizing training.
However, we argue that imposing a \emph{fixed aggregation geometry}, i.e., whether arithmetic or geometric, cannot adapt to the dynamic shifts in signal reliability, leading to either unstable or overly conservative convergence.

In this work, we propose \emph{Power-Mean Policy Optimization (PMPO)}, a unified framework that generalizes group-based RL via the power-mean geometry parameterized by an exponent $p$, as shown in Fig. \ref{fig:compare}.
We show that GRPO and GMPO are merely special cases on this geometry: GRPO corresponds to $p=1$ (Arithmetic), while GMPO corresponds to the limit $p \to 0$ (Geometric).
PMPO breaks the rigidity of fixed geometry by allowing $p$ to vary continuously for each rollout.
Theoretically, we demonstrate that varying $p$ is equivalent to adjusting the inverse temperature of the implicit softmax distribution inherent in the gradient of the power mean. Higher $p$ sharpens the distribution, focusing updates on high-advantage tokens (aggressive), while lower $p$ smooths the distribution (conservative).

The core challenge in this unified framework lies in determining the optimal $p$ for each trajectory.
Directly determining $p$ is ill-posed because its impact on the gradient is not scale-invariant, but rather fluctuates with the magnitude of the log-probabilities.
To address this, we introduce \textit{Clip-aware ESS Matching}.
In PPO-style optimization, clipping indicates trust-region saturation, i.e., signal that the local gradient approximation is becoming unreliable.
Intuitively, trajectories with high clipping rates are ``risky'' and require conservative update (geometric), whereas ``safe'' trajectories with minimal clipping permit aggressive updates (arithmetic).
To formalize this, we leverage the \emph{Effective Sample Size (ESS)} of the induced token weights as a scale-invariant measure of update concentration. We define a deterministic mapping from the trajectory's clipping fraction to a target ESS, and then numerically solve for the specific $p$ that enforces this distribution.
This mechanism inversely scales $p$ with the clipping fraction, i.e., retreating to the conservative geometric mean when instability rises, and turning to the arithmetic mean when signal quality is high.

\begin{itemize}
    \vspace{-4mm}
    \item We establish the power-mean geometry as a generalized view of group-based RL, unifying GRPO ($p=1$) and GMPO ($p \to 0$) into a single continuum.
     \vspace{-2mm}
    \item We propose a novel method to determine the aggregation geometry exponent $p$ by linking trust-region clipping to the Effective Sample Size, providing a scale-invariant control over update conservatism.
     \vspace{-2mm}
    \item We derive the gradient of the PMPO, revealing that $p$ acts as an inverse temperature for a softmax attention mechanism over token-level log-probability changes.
     \vspace{-2mm}
    \item Experiments demonstrate that PMPO establishes a new state-of-the-art for grouped-based RL, effectively balancing stability and efficiency.
     \vspace{-2mm}
\end{itemize}

\section{Related Work}
Group Relative Policy Optimization (GRPO)~\cite{grpo} has established a strong baseline by eliminating value models, with subsequent research focusing on simplifying its core framework, improving stability, and addressing efficiency bottlenecks. While approaches like GPG~\cite{gpg} and PRIME~\cite{prime} streamline the optimization loop by removing surrogate losses or critics, others focus on robust sampling and bias correction; for instance, GSPO~\cite{gspo} and DAPO~\cite{dapo} refine clipping and sequence-level optimization to enhance stability, whereas SRPO~\cite{srpo}, RePO~\cite{repo}, OPO~\cite{opo} tackle trajectoryselection, gradient variance, and history resampling. 
To mitigate length bias and improve the accuracy--efficiency trade-off, methods such as Dr.GRPO~\cite{drgrpo}, GRPO-lead~\cite{grpolead}, and GRPO-$\lambda$~\cite{grpolambda} introduce dynamic penalties and length-aware filtering, complemented by efficiency-driven training recipes like GFPO~\cite{gfpo}, DLER~\cite{dler}, CPPO~\cite{cppo}, and S-GRPO~\cite{sgrpo}. Beyond structural adjustments, significant efforts have been directed toward reward shaping and advantage estimation, e.g., EMPO~\cite{empo}, AAPO~\cite{aapo}, BNPO~\cite{bnpo}, and Seed-GRPO~\cite{seedgrpo}, or even removing external rewards entirely as seen in INTUITOR~\cite{intuitor} and RAFT~\cite{raft}. Finally, these algorithmic innovations are bolstered by exploration techniques like the data-centric contributions such as Open-Reasoner-Zero~\cite{open} and Eurus~\cite{eurus}, which provide high-quality curriculum learning and alignment datasets. GMPO \cite{gmpo} addresses the instability of GRPO by replacing the arithmetic mean with a geometric mean. While group-based RL methods have evolved from the arithmetic mean to the geometric stabilization, a critical limitation remains: the reliance on a \emph{fixed} aggregation geometry.
 \vspace{-3mm}
\section{Method}
\begin{figure*}[!t]
\centering 
\includegraphics[width=1.0\linewidth]{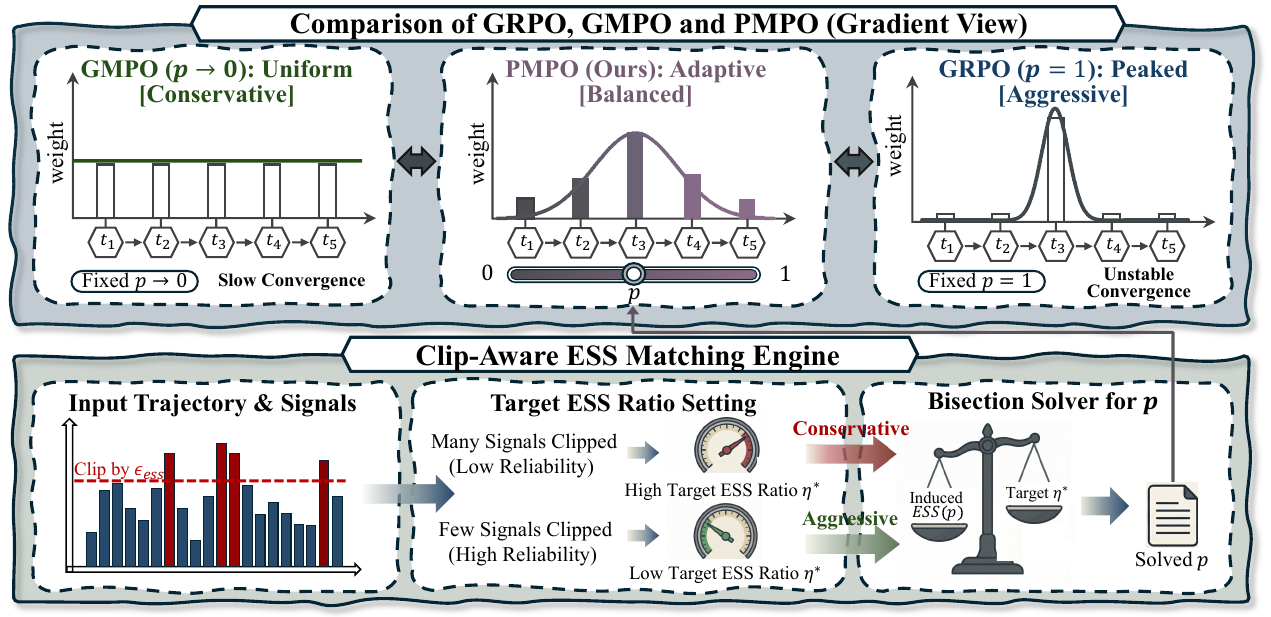}
\vspace{-7mm}
\caption{(a) Gradient view comparison of GRPO, GMPO and PMPO, and (b) Adaptive $p$ via clip-aware ESS matching.}
\vspace{-4mm}
\label{fig:compare}
\end{figure*}
\subsection{Problem Setting and Notation}

We consider online RL training for LLM reasoning with a pre-trained autoregressive model $\pi_\theta$.
For each input problem $x$, the actor samples $K$ full trajectories (reasoning traces) by autoregressive decoding under the old policy $\pi_{\theta_{\text{old}}}$:
\vspace{-1mm}
\begin{equation}
    \tau^{(k)} = \big(s^{(k)}_1, a^{(k)}_1, \dots, s^{(k)}_{T_k}, a^{(k)}_{T_k}\big).
    \vspace{-1mm}
\end{equation}
where $s^{(k)}_t$ represents the state (context history) at step $t$, $a^{(k)}_t$ denotes the action (generated token), and $T_k$ is the length of the trajectory.
A rule-based verifier provides a scalar reward $R^{(k)} \in \{0,1\}$ for each trajectory, indicating whether the final boxed or tagged answer is correct. As in GRPO/GMPO, we treat each trajectory as a bandit episode with a single terminal reward. The log-probability change of token $a^{(k)}_t$ under the new policy relative to the old policy is formulated as:
\vspace{-1mm}
\begin{equation}
    \Delta \ell^{(k)}_t
    \;:=\;
    \log \pi_\theta\big(a^{(k)}_t \mid s^{(k)}_t\big)
    - \log \pi_{\theta_{\text{old}}}\big(a^{(k)}_t \mid s^{(k)}_t\big).
\end{equation}
The per-token importance ratio is defined as:
\vspace{-1mm}
\begin{equation}
    r^{(k)}_t = \exp\big(\Delta \ell^{(k)}_t\big)
    = \frac{\pi_\theta(a^{(k)}_t \mid s^{(k)}_t)}{\pi_{\theta_{\text{old}}}(a^{(k)}_t \mid s^{(k)}_t)}.
    \vspace{-1mm}
\end{equation}
We let $S^{(k)}$ denote the set of valid completion tokens for trajectory $k$, as determined by a response mask (excluding prompt tokens and padding), and write $n^{(k)} = |S^{(k)}|$. We also assume access to a trajectory-level advantage signal $A^{(k)}$.
In the group-based bandit setting, this is computed per-problem by:
\vspace{-3mm}
\begin{equation}
    A^{(k)} = R^{(k)} - \frac{1}{K} \sum_{j=1}^{K} R^{(j)},
    \vspace{-2mm}
\end{equation}
where the sum ranges over the $K$ trajectories associated with the same problem $x$.
More generally, $A^{(k)}$ can be obtained from a value function critic as in PPO.

\subsection{Fixed-Geometry Aggregations}

The core design choice in GRPO- and GMPO-style methods is how to aggregate token-level ratios $\{r^{(k)}_t\}_{t\in S^{(k)}}$ into a single effective ratio $\hat{r}^{(k)}$ that scales the advantage $A^{(k)}$ in the surrogate objective. Broadly speaking, there are two dominant paradigms for this aggregation: the arithmetic mean employed by GRPO \cite{grpo} and the geometric mean introduced by GMPO \cite{gmpo} and GSPO \cite{gspo}, which are detailed in Sec. \ref{sec:grpo} and \ref{sec:gmpo}, respectively. 

\subsubsection{GRPO (arithmetic-mean geometry).}
\label{sec:grpo}
GRPO implicitly uses an \emph{arithmetic mean} of token-level ratios, corresponding to a length-normalized sum:
\vspace{-1mm}
\begin{equation}
\begin{aligned}
    \hat{r}^{(k)}_{\text{GRPO}}
    \;=\;
    \frac{1}{n^{(k)}} \sum_{t\in S^{(k)}} r^{(k)}_t,
    \qquad \\\vspace{-1mm}
    L_{\text{GRPO}}(\theta)
    \;=\;
    - \mathbb{E}_k \big[ A^{(k)} \, \hat{r}^{(k)}_{\text{GRPO}} \big].
\end{aligned}
\vspace{-1mm}
\end{equation}
Equivalently, this can be viewed as a per-token policy gradient loss averaged across tokens.
The arithmetic geometry is sensitive to outliers: a few large ratios can dominate the sum, leading to unstable gradient updates and length bias.

\subsubsection{GMPO (geometric-mean geometry).}
\label{sec:gmpo}
GMPO replaces this arithmetic mean with a geometric mean of ratios:
\vspace{-2mm}
\begin{equation}
\begin{aligned}
    \hat{r}^{(k)}_{\text{GMPO}}
    =
    \Big(\prod_{t\in S^{(k)}} r^{(k)}_t \Big)^{\frac{1}{n^{(k)}}}
    =
    \exp\Big(
        \frac{1}{n^{(k)}} \sum_{t\in S^{(k)}} \Delta \ell^{(k)}_t
    \Big),\\
    \vspace{-1mm}
    with\,\,surrogate:L_{\text{GMPO}}(\theta)
    \;=\;
    - \mathbb{E}_k \big[ A^{(k)} \, \hat{r}^{(k)}_{\text{GMPO}} \big].
\end{aligned}
\end{equation}
In practice, GMPO also performs PPO-style clipping \citep{ppo} in the log-domain:
each $\Delta \ell^{(k)}_t$ is transformed into a clipped version before aggregation, aligned with the sign of the advantage $A^{(k)}$. Similarly, GSPO also adopt geometric mean of tokens but introduces a novel sequence-level clipping mechanism. Overall, this geometric mean dramatically reduces the influence of extreme ratios but fixes the aggregation geometry for all trajectories. Both GRPO and GMPO can therefore be viewed as choosing a \emph{fixed mean} over token-level ratios. Our proposed PMPO generalizes this choice.

\subsection{Power-Mean Policy Optimization}
\subsubsection{Power Mean over Token-Level Ratios}
We propose to aggregate the token-level ratios $\{r^{(k)}_t\}_{t\in S^{(k)}}$ using the \emph{power mean} (generalized mean) of order $p$. We define the \emph{PMPO effective ratio} for trajectory $k$ as:
\vspace{-2mm}
\begin{equation}
\begin{aligned}
    \hat{r}^{(k)}_p
    &=
    (
        \frac{1}{n^{(k)}} \sum_{t\in S^{(k)}} \big(r^{(k)}_t\big)^p
    )^{\!\frac{1}{p}}\\ \vspace{-2mm}
    &=
    (
        \frac{1}{n^{(k)}} \sum_{t\in S^{(k)}} \exp\big(p \, \Delta \ell^{(k)}_t\big)
    )^{\!\frac{1}{p}}.
    \label{eq:pmpo_ratio}
    \end{aligned}
    \vspace{-2mm}
\end{equation}
Given this effective ratio, the PMPO surrogate (before clipping) takes the form:
\begin{equation}
    L_{\text{PMPO}}(\theta)
    =
    - \mathbb{E}_k \big[ A^{(k)} \, \hat{r}^{(k)}_p \big].
    \vspace{-2mm}
\end{equation}

Two important special cases emerge: \\
1) When $p \to 0$, it turns to geometric-mean-like GMPO:
\vspace{-1mm}
    \begin{equation}
        \hat{r}^{(k)}_0
        =
        \exp\Big(
            \frac{1}{n^{(k)}} \sum_{t\in S^{(k)}} \Delta \ell^{(k)}_t
        \Big)
        =
        \hat{r}^{(k)}_{\text{GMPO}}.
        \vspace{-2mm}
    \end{equation}
    2)  When $p=1$, it turns to Arithmetic-mean-like GRPO:
    \vspace{-1mm}
    \begin{equation}
        \hat{r}^{(k)}_1
        =
        \frac{1}{n^{(k)}} \sum_{t\in S^{(k)}} \exp(\Delta \ell^{(k)}_t)
        =
        \hat{r}^{(k)}_{\text{GRPO}},
        \vspace{-1mm}
    \end{equation}
which corresponds to an arithmetic mean over ratios (GRPO). Thus, PMPO provides a continuum of aggregation geometries indexed by $p$, with GMPO and GRPO-like behavior at the two ends of the interval. Mathematically, the power mean $M_p(\{x_t\})$ is well-defined for any real $p\in\mathbb{R}$ (recovering the geometric mean in the limit $p \to 0$) as long as $x_t>0$.
Moreover, for any fixed positive set $\{x_t\}$, the generalized mean is monotone in its order:
if $p_1 < p_2$ then $M_{p_1}(\{x_t\}) \le M_{p_2}(\{x_t\})$, proven in Appendix \ref{app:proof_monotonicity}.

\subsubsection{Gradient Analysis of PMPO}
\label{sec:grad_analysis}
A key property of power means is that the gradient structure is particularly interpretable.
For clarity, we suppress the trajectory index $k$ and let $n = |S|$.
Recall that $\hat{r}_p = (\frac{1}{n} \sum_{t\in S} r_t^p)^{1/p}$.
To analyze how token-level shifts influence the aggregated objective, we first compute the partial derivative of $\hat{r}_p$ w.r.t. an individual ratio $r_j$:
\vspace{-2mm}
\begin{equation}
\begin{aligned}
    (\hat{r}_p)^p
    =
    \frac{1}{n} \sum_{t\in S} r_t^p
      &\Longrightarrow  
    p \hat{r}_p^{p-1} \frac{\partial \hat{r}_p}{\partial r_j}
    =
    \frac{p}{n} r_j^{p-1} \\ \vspace{-4mm}
    &\Longrightarrow 
    \frac{\partial \hat{r}_p}{\partial r_j}
    =
    \frac{\hat{r}_p^{1-p}}{n} r_j^{p-1}.
\end{aligned}
\vspace{-2mm}
\end{equation}
Using the chain rule with $r_j = \exp(\Delta \ell_j)$ (where $\frac{\partial r_j}{\partial \Delta \ell_j} = r_j$), we obtain the gradient w.r.t. the log-probability change:
\vspace{-2mm}
\begin{equation}
\label{gradient}
\begin{aligned}
    \frac{\partial \hat{r}_p}{\partial \Delta \ell_j}
    &= \frac{\partial \hat{r}_p}{\partial r_j} \cdot r_j 
     = \frac{\hat{r}_p^{1-p}}{n} r_j^{p} = \frac{\hat{r}_p}{n} 
    \frac{r_j^p}{
        \frac{1}{n} \sum_{t\in S} r_t^p
    } 
    \\
    &= \frac{\hat{r}_p}{n}
    \cdot
    \underbrace{
        \frac{\exp(p \Delta \ell_j)}{\sum_{t\in S} \exp(p \Delta \ell_t)}
    }_{\text{softmax}_p(\Delta \ell)_j}.
\end{aligned}
\vspace{-1mm}
\end{equation}
Therefore, the gradient of $\hat{r}_p$ with respect to each token-level log-probability change $\Delta \ell_j$ is proportional to a \emph{softmax over $\Delta \ell_t$ with temperature $\frac{1}{p}$}.
The induced policy gradient contribution from token $j$ in a single trajectory is:
\vspace{-1mm}
\begin{equation}
    \frac{\partial L_{\text{PMPO}}}{\partial \Delta \ell_j}
    \propto
    - A \, \frac{\partial \hat{r}_p}{\partial \Delta \ell_j}
    \propto
    - A \, \hat{r}_p \, \text{softmax}_p(\Delta \ell)_j.
    \vspace{-0.5mm}
\end{equation}
This leads to the following interpretation: 1) For \textbf{small $p$} (close to $0$), the softmax distribution over $\Delta \ell_t$ is nearly \emph{uniform}, so all tokens in the completion share gradient mass more equally, mirroring the robustness of the geometric mean. 2) For \textbf{larger $p$} (close to $1$), the softmax becomes more \emph{peaked} around tokens with large positive $\Delta \ell_t$, meaning that PMPO concentrates the update on tokens whose probability under the new policy increased the most, effectively emphasizing decisive reasoning steps.
By choosing $p$ adaptively per rollout, PMPO controls this softmax temperature based on the local signal structure of the trajectory, yielding a principled mechanism for adaptive token reweighting in group-based RL for LLMs. Fig. \ref{fig:compare}(a) visually illustrates this gradient behavior.

\subsubsection{Log-Domain Clipping and PMPO Variants}

To retain the numerical stability and conservative update behavior of PPO-like methods, PMPO follows GMPO in applying clipping in the log-domain on a per-token basis before aggregating via the power mean. Concretely, we first compute unclipped log-probability differences as follows:
\begin{equation}
    \Delta \ell_t
    =
    \log \pi_\theta(a_t \mid s_t)
    - \log \pi_{\theta_{\text{old}}}(a_t \mid s_t),
    \vspace{-1mm}
\end{equation}
then define a sign-corrected and clipped version:
\vspace{-2mm}
\begin{equation}
    \tilde{\Delta} \ell_t
    =
    \operatorname{sgn}(A)
    \cdot
    \max\!\big(\!
        \operatorname{sgn}(A) \, \Delta \ell_t,
        \operatorname{clip}(\operatorname{sgn}(A) \Delta \ell_t, -c, c)
    \big),
\end{equation}
where $c > 0$ is a clipping threshold corresponding to PPO's clipping range in log-space.
PMPO then computes its effective ratio using these clipped log-differences:
\vspace{-1mm}
\begin{equation}
    \hat{r}_p
    =
    M_p\big(\{\exp(\tilde{\Delta} \ell_t)\}_{t\in S}\big).
    \vspace{-2mm}
\end{equation}
Our implementation supports a family of ablation variants: \textbf{PMPO (default)} uses log-domain clipping and length-normalized aggregation (power mean with $1/n$ normalization). \textbf{PMPO-noclip} sets a very large clip range, effectively disabling clipping but retaining the power-mean aggregation. \textbf{PMPO-seqclip} applies clipping to the sequence log-difference $\sum_{t\in S} \Delta \ell_t$ rather than per-token \citep{gspo}. \textbf{PMPO-without-norm} removes the $1/n$ normalization \citep{drgrpo}, using a ``sum-aggregated'' power mean to study length bias and scale sensitivity.

\subsubsection{Determining $p$ via ESS Matching}
\label{sec:ess}
For stability in ESS calculation...
The central design choice in PMPO is how to determine the exponent $p$ for each rollout.
In this paper, we use a simple and deterministic rule: ESS matching of the induced token weights, as illustrated in Fig. \ref{fig:compare}. For stability in ESS calculation, we use the clipped values $\tilde{\Delta}\ell_t$ to compute the induced weights.
We constrain $p \in [p_{\min}, p_{\max}]$ ($[0.01, 0.99]$) to prevent numerical instability at the boundaries. Our gradient analysis shows that PMPO implicitly induces token weights:
\vspace{-2mm}
\begin{equation}
    w_t(p) = \mathrm{softmax}\big(p \, \tilde{\Delta}\ell_t\big).
    \vspace{-2mm}
\end{equation}
We select $p$ by \emph{solving} a monotone matching condition on $w(p)$ via the normalized effective sample size (ESS) ratio:
\vspace{-2mm}
\begin{equation}
    \mathrm{ESS}_{\mathrm{norm}}(p)
    =
    \frac{1}{n}\cdot\frac{1}{\sum_{t\in S} w_t(p)^2}
    \in \big[\tfrac{1}{n}, 1\big].
    \vspace{-2mm}
\end{equation}
ESS is a standard concentration measure for a probability vector \citep{ess}.
It admits an immediate interpretation as the \emph{effective number of tokens} receiving weight:
if the weight mass is distributed uniformly over exactly $k$ tokens (and zero on the remaining $n-k$), i.e.,
$w_t=\frac{1}{k}$ for $t$ in some subset of size $k$ and $w_t=0$ otherwise, then:
\vspace{-3mm}
\begin{equation}
\begin{aligned}
    \mathrm{ESS}(w)=&\frac{1}{\sum_{t=1}^n w_t^2}
    =
    \frac{1}{k\cdot(1/k)^2}
    = k,\\
    \vspace{-2mm}
    &\mathrm{ESS}_{\mathrm{norm}}(w)=\frac{k}{n}.
\end{aligned}
\vspace{-2mm}
\end{equation}
Thus, an ESS target $\eta^\star$ directly specifies the desired \emph{effective token fraction} under the induced softmax weighting. GMPO corresponds to $p\to 0$, which makes $w_t$ nearly uniform and yields $\mathrm{ESS}_{\mathrm{norm}}\approx 1$.
In contrast, GRPO corresponds to the aggregation order $p=1$, but its induced weights $w_t\propto \exp(\tilde{\Delta}\ell_t)$ generally have a \emph{rollout-dependent} concentration; therefore, GRPO does \emph{not} correspond to a single, well-defined ESS.
Rather than forcing a fixed target to ``match GRPO'', we treat ESS as a controllable concentration knob and set the target using a stability signal.

\begin{algorithm}[!t]
    \caption{Power-Mean Policy Optimization (PMPO)}
    \label{alg:pmpo}
    \begin{algorithmic}[1]
        \REQUIRE Policies $\pi_{\theta}, \pi_{\theta_{\text{old}}}$, group size $K$, clip range $c$, ESS threshold $\epsilon_{\mathrm{ess}}$
        
        \STATE Sample trajectories $\{\tau^{(k)}\}_{k=1}^K$ via $\pi_{\theta_{\text{old}}}$ and compute advantages $\{A^{(k)}\}$
        
        \FOR{each trajectory $k = 1 \dots K$}
            \STATE \textbf{\textit{\# Phase I: Signal Reliability Analysis}}
            \STATE Compute per-token log-probability differences $\Delta\ell^{(k)}_t$ and clipped values $\tilde{\Delta}\ell^{(k)}_t$ using $c$
            \STATE Calculate clip fraction $f_{\mathrm{clip}}^{(k)}$ based on threshold $\epsilon_{\mathrm{ess}}$
            
            \STATE \textbf{\textit{\# Phase II: Geometry Adaptation}}
            \STATE Let $n = |S^{(k)}|$ and set $\eta^{\star} \leftarrow \frac{1}{n} + f_{\mathrm{clip}}^{(k)}(1-\frac{1}{n})$
            \STATE Solve for $p^{(k)}$ s.t. $\mathrm{ESS}_{\mathrm{norm}}(p^{(k)}) = \eta^{\star}$
            
            \STATE \textbf{\textit{\# Phase III: Aggregation}}
            \STATE $\hat{r}^{(k)} \leftarrow \left( \frac{1}{n} \sum_{t} \exp(p^{(k)} \tilde{\Delta}\ell^{(k)}_t) \right)^{1/p^{(k)}}$
        \ENDFOR
        \STATE \textbf{return} Gradient update $\theta \leftarrow \theta + \alpha \nabla \frac{1}{K}\sum_{k} A^{(k)} \hat{r}^{(k)}$
    \end{algorithmic}
\end{algorithm}

\paragraph{Clip-aware target ESS via Signal Reliability.}
We determine the target ESS $\eta^\star$ by interpreting the clipping mechanism as a proxy for signal reliability.
Tokens that trigger clipping indicate trust region saturation, i.e., points where the importance ratio deviates significantly, rendering the local gradient approximation unreliable.
Conceptually, we view unclipped tokens as carrying valid, high-fidelity learning signals, while clipped tokens represent saturated or noisy signals.
When the clip fraction is high, the trajectory's overall reliability decreases, necessitating a conservative fallback to the robust geometric mean (high ESS).
Conversely, a low clip fraction signals reliable gradients, permitting a more aggressive, arithmetic-like update (low ESS).

To operationalize this, we borrow the clipping operator with a threshold $\epsilon_{\mathrm{ess}}$ (different from the training clip range $c$) and compute the fraction of saturated response tokens:
\vspace{-2mm}
\begin{equation}
    f_{\mathrm{clip}}
    :=
    \frac{1}{n}\sum_{t\in S} \mathbf{1}\!\left[
        \Delta \ell_t \neq
        \operatorname{clip}(\Delta \ell_t,-\epsilon_{\mathrm{ess}},\epsilon_{\mathrm{ess}})
    \right].
    \vspace{-2mm}
\end{equation}
We then set the target normalized ESS $\eta^\star$ via a linear interpolation based on this saturation rate:
\vspace{-1mm}
\begin{equation}
    \eta^\star
    =
    \tfrac{1}{n}
    +
    f_{\mathrm{clip}}\Big(1-\tfrac{1}{n}\Big)
    \in \big[\tfrac{1}{n}, 1\big],
    \vspace{-1mm}
\end{equation}
where the factor $(1-\tfrac{1}{n})$ ensures the correct range: $\eta^\star=\tfrac{1}{n}$ when no tokens are clipped ($f_{\mathrm{clip}}=0$), and $\eta^\star=1$ when all response tokens are clipped ($f_{\mathrm{clip}}=1$), i.e., a simple length-normalized linear mapping from the clip fraction to the ESS. Note that while $\eta^\star=1/n$ theoretically implies max-pooling ($p \to \infty$), in practice, this unreachable target forces the solver to hit the upper bound $p_{\max}$ (GRPO-like behavior) whenever the signal is reliable.
Intuitively, more clipping indicates that more tokens are pushing against the trust region, so we set a \emph{larger} ESS target (more uniform token weights) to avoid concentrating the update on potentially saturated tokens. Conversely, little or no clipping permits a smaller ESS target (more concentrated weights).
Finally, for each trajectory, we solve for $p \in [p_{\min}, p_{\max}]$ to satisfy $\mathrm{ESS}_{\mathrm{norm}}(p)=\eta^\star$ using bisection. We provide the full pseudo-code of the PMPO in Alg. \ref{alg:pmpo}.

\section{Experiment}
\label{sec:experiment}
\begin{table*}[!t]
    \centering
    \caption{\textbf{Ablation on Component Design.} We analyze the contribution of aggregation geometry, log-domain clipping, and the proposed adaptive power-mean mechanism. Metrics are reported on Qwen2.5-Math-7B.}
    \label{tab:ablation_components}
    \setlength{\tabcolsep}{5pt}
    \begin{tabular}{lcccccccccc}
        \toprule
        \multirow{2}{*}{\textbf{Method}} & \multicolumn{3}{c}{\textbf{Method Configuration}} & \multicolumn{6}{c}{\textbf{Metrics}} \\
        \cmidrule(lr){2-4} \cmidrule(lr){5-10}
         & \textbf{Geometry} & \textbf{Clip} & \textbf{Adaptive $p$} & \textbf{AIME24} & \textbf{AMC} &  \textbf{MATH500} & \textbf{Minerva} & \textbf{Oly.} & \textbf{Avg.} \\
        \midrule
        \rowcolor{gray!10} \textbf{Baselines} & & & & & & & & & \\
        GRPO & Arith. ($p{=}1$) & \cmark  & \xmark & 40.0 & 59.0 & 83.4 & 32.4 & 41.3 & 51.2 \\
        GMPO w/o Clip & Geom. ($p{\to}0$) & \xmark & \xmark & 40.0 & 63.9 & 80.6 & 33.5 & 43.7 & 52.3 \\
        GMPO & Geom. ($p{\to}0$) & \cmark & \xmark & 43.3 & 61.4 & 82.0 & 33.5 & 43.6 & 52.7 \\
        \midrule
        \rowcolor{gray!10} \textbf{PMPO Variants} & & & & & & & & & \\
        PMPO-Fixed & Power($p{=}0.5$) & \cmark & \xmark & 36.7 & 62.7 & 83.8 & 35.7 & 46.1 & 53.0 \\
        PMPO w/o Clip & Power & \xmark & \cmark & 30.0 & 59.0 & 82.8 & 34.2 & 46.0 & 50.4 \\
        \textbf{PMPO} & Power & \cmark & \cmark & 36.7 & 68.7 & 83.8 & 34.9 & 46.7 & \textbf{54.2} \\
        \toprule
    \end{tabular}
    \vspace{-3mm}
\end{table*}
\subsection{Implementation Details}

\noindent\textbf{Models.}
We conduct language-only mathematical reasoning experiments on three base models:
Qwen2.5-Math-1.5B~\citep{qwen2.5}, Qwen2.5-Math-7B\cite{qwen2.5} and DeepSeek-R1-Distill-Qwen-7B\cite{deepseek}.
All methods start from the same pre-trained checkpoints and perform online RL updates on top of the base model.

\noindent\textbf{Training}
Following prior group-based RL setups~\citep{drgrpo}, we train on MATH~\citep{math} Levels 3--5 (8{,}523 problems).
Rewards are verifiable: we assign ``$1$'' to correct responses and ``$0$'' otherwise using a rule-based math verifier. For each prompt, we generate $K=8$ trajectories and cap the maximum response length at 3{,}000 tokens.
In each training round, the old policy $\pi_{\theta_{\mathrm{old}}}$ produces 1{,}024 trajectories, and the current policy $\pi_{\theta}$ is updated for 8 optimization steps with a batch size of 128, resulting in 8 policy updates per trajectorybatch.
We use AdamW with betas $(0.9, 0.95)$ and weight decay $0.0$. Unless specified otherwise, we use a constant learning rate of $10^{-6}$. Detailed hyper-parameters are listed in Tab. \ref{tab:oat_hparams} of Sec. \ref{supp:hyper}. All model training are performed on 8 NVIDIA A800 (80GB) GPUs. 

\noindent\textbf{Evaluation benchmarks and metrics.}
We evaluate on a suite of mathematical reasoning benchmarks spanning different difficulty levels:
AIME24\citep{aime24}, AMC \citep{aime24}, MATH500 \citep{math}, Minerva~\citep{minerva}, and OlympiadBench (Oly.)~\citep{oly}.
We primarily report Pass@1 (accuracy) following Dr.GRPO~\citep{drgrpo}.

\subsection{Ablation Studies}
\label{sec:ablation}

We conduct component-wise ablations to validate the design of PMPO, with results summarized in Tab.~\ref{tab:ablation_components}.
First, to confirm the necessity of dynamic adaptation, we compare PMPO against an optimal fixed geometry ($p=0.5$). While this static compromise outperforms the GMPO, it still lags behind our adaptive approach, verifying that the optimal aggregation geometry is trajectory-dependent rather than universal.
Crucially, we identify log-domain clipping as the fundamental safety rail for stability. Removing this mechanism (PMPO w/o Clip) leads to catastrophic training collapse: although the model initially reaches a peak accuracy of 50.4\%, it rapidly diverges as the training step accumulates, which is due to unbounded arithmetic-like updates violating the trust region.
We also provide analysis for time complexity of PMPO in Appendix. \ref{app:complexity}. In addition, to rigorously validate the necessity of this ESS matching mechanism against a simpler direct mapping (from clipping rate to $p$), we provide a detailed theoretical and empirical comparison in Appendix~\ref{app:scale_invariance}.
\begin{table*}[t]
    \centering
    \caption{Comparison with the existing state-of-the-art methods on five mathematical reasoning benchmarks (AIME24, AMC, MATH500, Minerva and OlympiadBench) for Qwen2.5-Math-1.5B, Qwen2.5-Math-7B, DeepSeek-R1-Distill-Qwen-7B. }
    \label{tab:main_results_math}
    \begin{tabular}{c|ccccc|c}
        \toprule
        Method & AIME24 & AMC & MATH500 & Minerva & Oly. & \textbf{Avg.} \\
        \midrule
        Qwen2.5-Math-1.5B \citep{qwen2.5}  & 16.7 & 43.4 & 61.8 & 15.1 & 28.4 & 33.1\\
        Qwen2.5-Math-1.5B-Instruct \citep{qwen2.5} & 10.0 & 48.2 & 74.2 & 26.5 & 40.2 & 39.8 \\
        Dr.GRPO-1.5B \citep{drgrpo} & 20.0 & 53.0 & 74.2 & 25.7 & 37.6 & 42.1  \\
        GMPO-1.5B \citep{gmpo} & 20.0 & 53.0 & 77.6 & 30.1 & 38.7 & \underline{43.9}\\
        PMPO-1.5B (Ours) & 23.3 & 52.4 & 77.8 & 30.7 & 39.3 & \textbf{44.7} \\
        \midrule
        Qwen2.5-Math-7B \citep{qwen2.5} & 16.7 & 38.6 & 50.6 & 9.9 & 16.6 & 26.5 \\
        SimpleRL-Zero-7B \citep{simplerl} & 26.7 & 60.2 & 78.2 & 27.6 & 40.3 & 46.6 \\
        PRIME-Zero-7B \citep{prime} & 16.7 & 62.7 & 83.8 & 36.0 & 40.9 & 48.0 \\
        OpenReasoner-Zero-7B@3k \citep{open} & 13.3 & 47.0 & 79.2 & 31.6 & 44.0 & 43.0\\
        OpenReasoner-Zero-7B@8k \citep{open} & 13.3 & 54.2 & 82.4 & 31.6 & 47.9 & 45.9  \\
        Eurus-7B \citep{eurus} & 16.7 & 62.7 & 83.8 & 36.0 & 40.9 & 48.0  \\
        GPG-7B \cite{gpg} & 33.3 & 65.0 & 80.0 & 34.2 & 42.4 & 51.0 \\
        Dr.GRPO-7B \cite{drgrpo} & 43.3 & 62.7 & 80.0 & 30.1 & 41.0 & 51.4  \\
        GMPO-7B \cite{gmpo} & 43.3 & 61.4 & 82.0 & 33.5 & 43.6 & \underline{52.7}\\
        PMPO-7B (Ours) & 36.7 & 68.7 & 83.8 & 34.9 & 46.7 & \textbf{54.2} \\
        \midrule
        Dr.GRPO-7B \cite{drgrpo} [R1-Distill] & 50.0 & 74.7 & 89.6 & 37.5 & 55.7 & 61.5\\
        GMPO-7B \cite{gmpo} [R1-Distill] & 46.6 & 78.3 & 91.4 & 37.9 & 62.5 & \underline{63.4}\\
        PMPO-7B (Ours) [R1-Distill] & 46.7 & 79.5 & 93.4 & 39.3 & 64.2 & \textbf{64.6} \\
        \bottomrule
    \end{tabular}
    \vspace{-5mm}
\end{table*}

\subsection{Comparison with State-of-the-Art Methods}
We present the main evaluation results in Tab. ~\ref{tab:main_results_math}. Across all model scales and benchmarks, PMPO consistently outperforms the existing group-based RL baselines.

\noindent\textbf{1.5B Scale.}
On Qwen2.5-Math-1.5B, PMPO achieves an average accuracy of 44.7\%, surpassing the strong geometric-mean baseline GMPO and the arithmetic-mean baseline Dr.GRPO. Notably, PMPO demonstrates robust improvements on hard reasoning tasks, such as AIME24 (+3.3\% vs. GMPO) and OlympiadBench (+0.6\% vs. GMPO).

\noindent\textbf{7B Scale.}
The performance gap widens at the 7B scale. PMPO-7B achieves a state-of-the-art average accuracy of \textbf{54.2\%} among methods trained with comparable budgets.
It significantly outperforms Dr.GRPO (+2.8\%) and GMPO (+1.5\%).
Compared to other recent methods such as PRIME-Zero (48.0\%) and GPG (51.0\%), PMPO establishes a new upper bound for online RL without value function critics. We further validate the robustness of PMPO on the high-performance DeepSeek-R1-Distill-Qwen-7B. Even on this strong reasoning baseline, PMPO yields consistent gains, achieving an average accuracy of 64.6\%. This represents a substantial improvement over Dr.GRPO (61.5\%) and GMPO (63.4\%). Notably, on the MATH500 benchmark, PMPO pushes the accuracy to a remarkable 93.4\%, significantly surpassing Dr.GRPO (89.6\%) and GMPO (91.4\%). 

\subsection{Analysis and Discussion}

\paragraph{Adaptive Aggregation Geometry.}
We track the evolution of the exponent $p$ during training and observe two distinct patterns: the mean value of $p$ stabilizes around $0.8$, while the maximum value $p_{\max}$ consistently saturates at the upper bound throughout the entire process, as shown in Fig. \ref{fig:p}. Mathematical reasoning is inherently signal-sparse: often only a few key steps or the final answer tokens carry the decisive logic. A larger $p$ allows the gradient to concentrate on these high-signal tokens. The equilibrium at $p \approx 0.8$ indicates that the model actively seeks an Arithmetic-like geometry to maximize credit assignment efficiency, only retreating to lower $p$ values when necessary to dampen noise.
In addition, the ceiling phenomenon of $p_{\max}$ forming a straight line at the boundary corresponds to trajectories with zero clipping ($f_{\mathrm{clip}} = 0$). For these ``safe'' trajectories, which lie well within the trust region, our ESS-matching correctly identifies that no conservative damping is needed. Consequently, the algorithm pushes $p$ to its upper limit to approximate the unbiased arithmetic mean as closely as possible. 
To rigorously verify whether the arithmetic mean ($p_{\max}=0.99$) represents the optimal upper limit, we also relax the upper constraint to allow $p > 1$ (e.g., up to $p_{\max}=2$). However, as detailed in Appendix \ref{app:unbounded_p}, we observe that exceeding the arithmetic mean leads to inferior performance due to variance amplification. 

\paragraph{Gradient Stability vs. Efficiency.}
We further investigate the stability of the optimization by monitoring the variance of the policy gradients.
GRPO-style baselines typically exhibit high-frequency spikes in gradient magnitude, indicative of sensitivity to outlier tokens.
Conversely, GMPO maintains extremely low variance but at the cost of slower convergence (as reflected in the performance gap in Table~\ref{tab:main_results_math}).
PMPO strikes a favorable balance: it suppresses the extreme spikes characteristic of GRPO while maintaining a higher signal-to-noise ratio than GMPO, supporting our theoretical claim of adaptive credit assignment.

\paragraph{Sensitivity to ESS Clipping Threshold ($\epsilon_{\mathrm{ess}}$).}
We investigate the impact of the clipping threshold $\epsilon_{\mathrm{ess}}$, which governs the sensitivity of our reliability estimation.
As shown in Fig. \ref{fig:p} (right), setting $\epsilon_{\mathrm{ess}}=0$ forces the mechanism to treat even infinitesimal deviations as "risk," effectively keeping $\eta^\star \approx 1$ and degenerating the method to the conservative GMPO baseline (52.7\%).
Conversely, relaxing the threshold to $\epsilon_{\mathrm{ess}} \ge 0.3$ causes the model to underestimate signal instability, allowing $p$ to remain high even for noisy trajectories. This aggressive behavior leads to performance degradation (50.8\%), approaching the instability of GRPO.
Performance peaks at $\epsilon_{\mathrm{ess}}=0.1$ (54.2\%), which aligns with the standard PPO trust region.
This suggests that the clipping statistic is a highly effective proxy for signal quality, provided the threshold is calibrated to the trust region boundary.
\begin{figure}[t]
\centering
\includegraphics[width=0.49\linewidth]{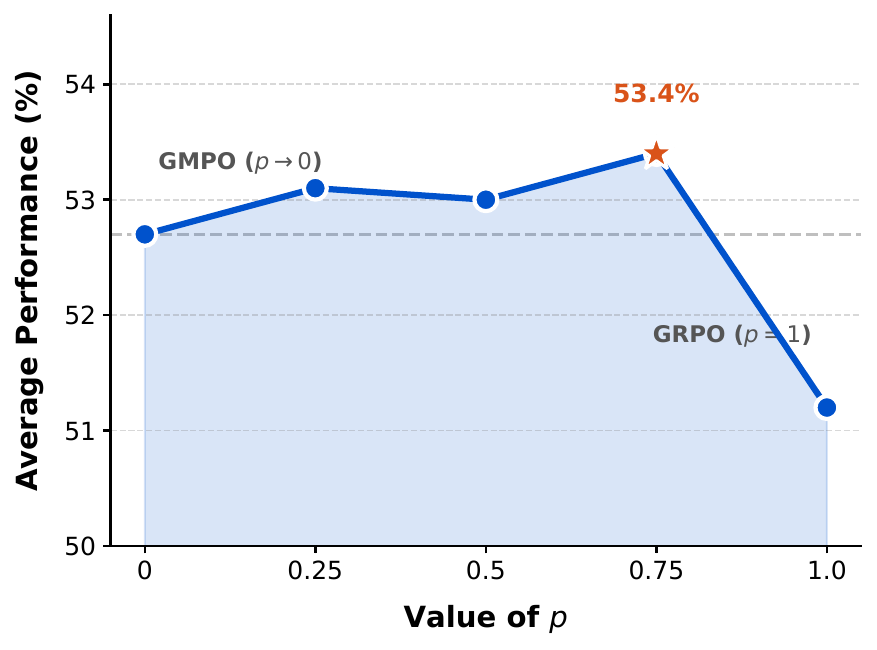}
\centering
\includegraphics[width=0.49\linewidth]{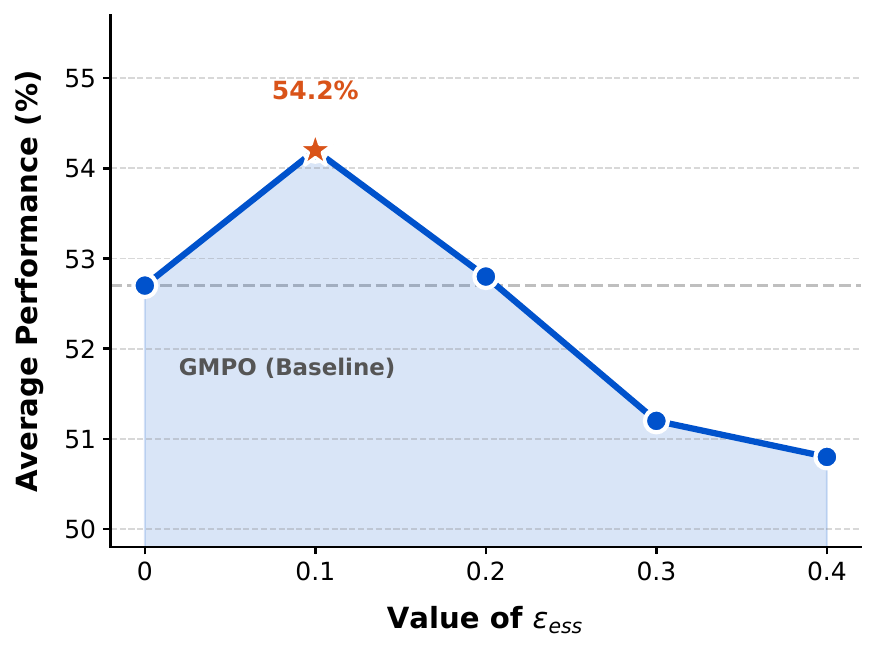}
\vspace{-5mm}
\centering
\caption{Effect of different $p$ and $\epsilon_{ess}$ on average performance on five mathematical reasoning datasets.}
\label{fig:ess}
\vspace{-5mm}
\end{figure}

\begin{figure*}[t]
\centering
\includegraphics[width=0.33\textwidth]{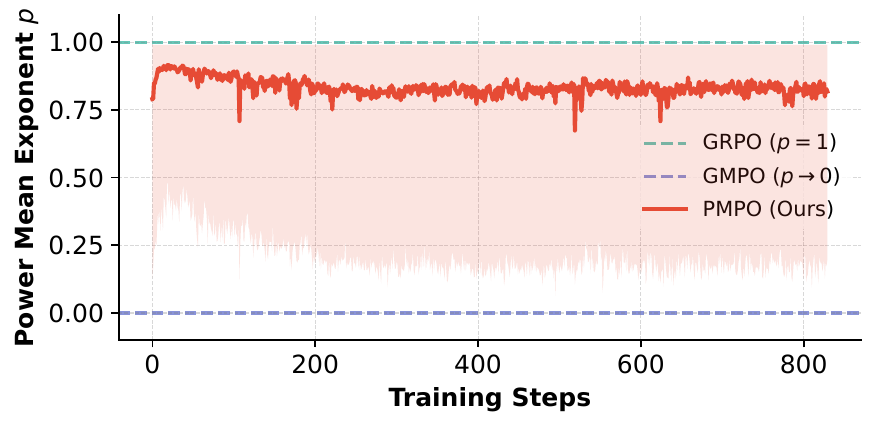}
\centering
\includegraphics[width=0.33\textwidth]{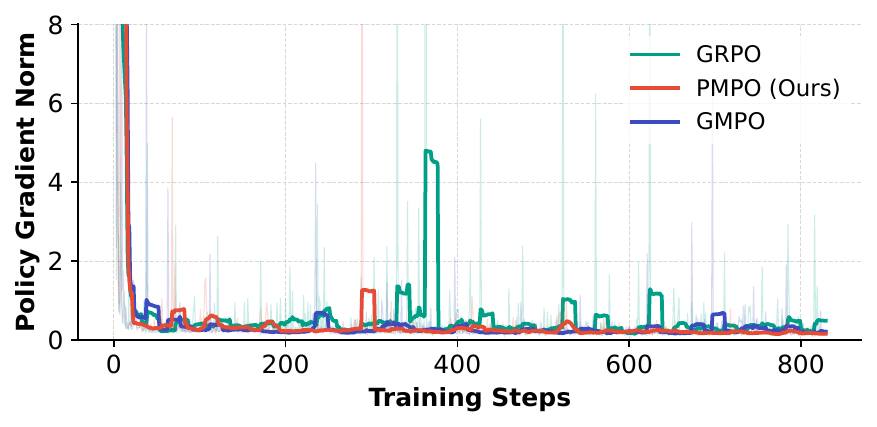}
\vspace{-1mm}
\includegraphics[width=0.33\textwidth]{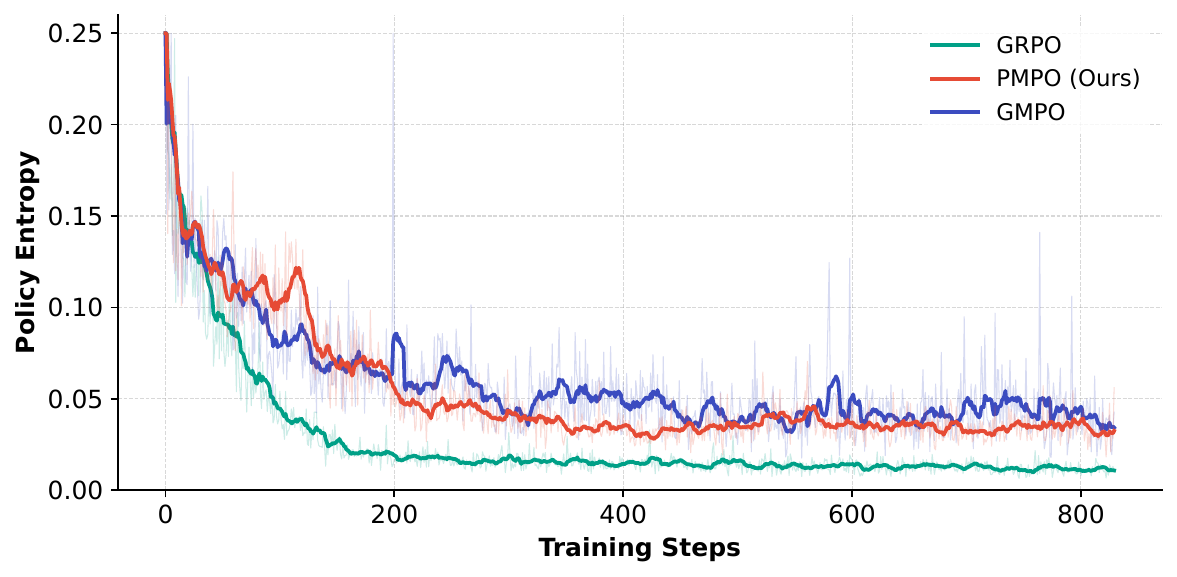}\\
\centering
\includegraphics[width=0.33\textwidth]{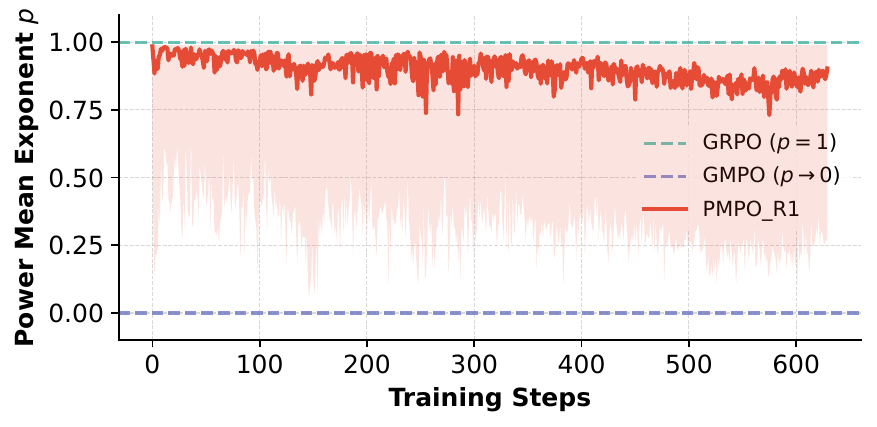}
\centering
\includegraphics[width=0.33\textwidth]{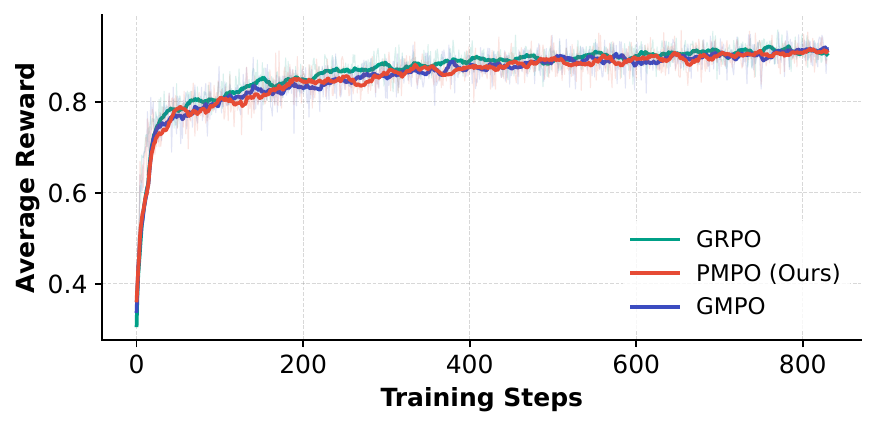}
\centering
\includegraphics[width=0.33\textwidth]{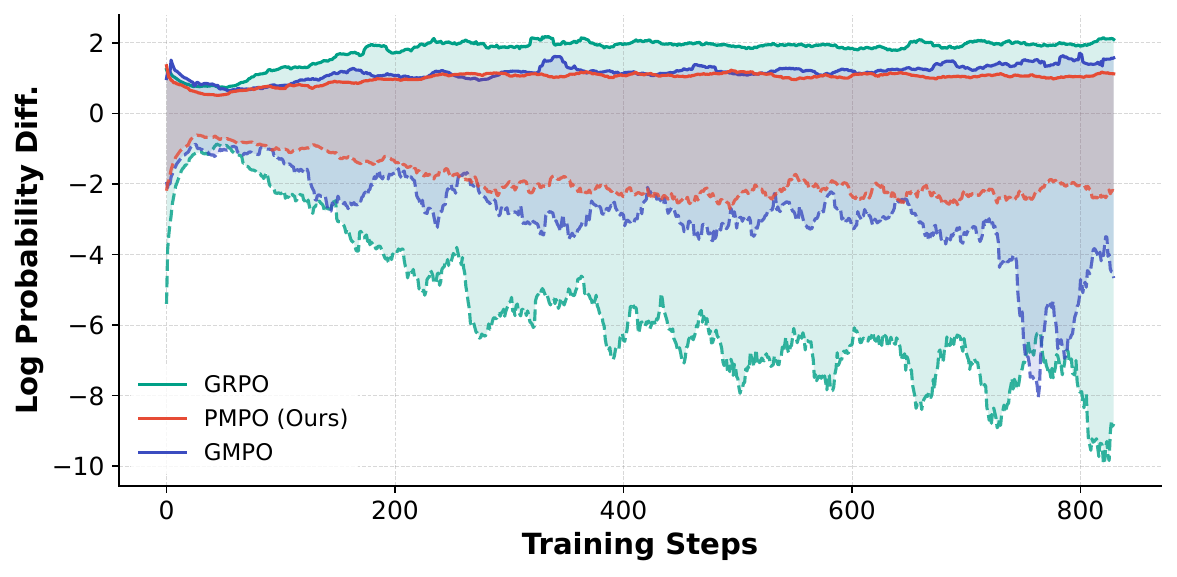}
\vspace{-8mm}
\caption{Training Dynamics. (Top Left and Bottom Left) Comparison of the policy gradient norm evolution across GRPO, GMPO, and PMPO over training steps based on Qwen-Math-7B and DeepSeek-R1-Distill. (Top Middle) The trajectory of the adaptive exponent $p$ (reporting both batch mean and maximum) throughout the training process. (Top Right and Bottom Middle) The curve of (average token-level entropy \& reward) throughout the training. (Bottom Right) The curve of the log-importance ratio $\Delta \ell_t = \log \frac{\pi_\theta(a_t|s_t)}{\pi_{\theta_{\text{old}}}(a_t|s_t)}$. }
\label{fig:p}
\vspace{-4mm}
\end{figure*}
\paragraph{Is Adaptive $p$ Necessary? (Static vs. Dynamic Geometry).}
\label{sec:p}
A natural question is whether the improvement comes from the dynamic mechanism or simply from finding a better fixed $p$ than $0$ or $1$.
We conducted a grid search over fixed $p \in \{0, 0.25, 0.5, 0.75, 1.0\}$.
As illustrated in Fig. \ref{fig:ess} (left), while intermediate values (e.g., $p=0.75$ with 53.4\%) outperform the extreme cases of GMPO ($p \to 0$) and GRPO ($p=1$), they still lag significantly behind our adaptive PMPO (\textbf{54.2\%}).
This gap highlights a crucial insight: the optimal aggregation geometry is not universal but trajectory-dependent.
High-quality trajectories benefit from the sharpness of $p \approx 1$, while noisy trajectories require the smoothing of $p \approx 0$.
A static compromise inevitably under-utilizes safe trajectories while failing to stabilize risky ones, whereas PMPO adaptively navigates this trade-off.
\begin{table}[t]
    \centering
    \caption{We compare PMPO against variants without clipping, with sequence-level clipping, and without length normalization.}
    \label{tab:ablation_clipping}
    \resizebox{\linewidth}{!}{ 
        \begin{tabular}{lcccc}
            \toprule
            & PMPO (Ours) & w/o Clipping & w/ Seq. Clip & w/o Length Norm \\
            \midrule
            Avg. & 54.2 & 50.4 & 54.0 & 53.1 \\
            \bottomrule
        \end{tabular}
    }
    \vspace{-4mm}
\end{table}
\paragraph{Impact of Sequence-level Clipping and Normalization.}
We verify the design choices regarding clipping scope and length normalization in Table~\ref{tab:ablation_clipping}.
Replacing our token-wise clipping with sequence-level clipping (PMPO-seqclip, following GSPO \citep{gspo}) results in a slight performance drop to 54.0\%. This indicates that coarse-grained sequence constraints obscure the fine-grained token instability signals essential for our ESS-matching mechanism.
Similarly, removing length normalization (PMPO-without-norm, following Dr.GRPO \cite{drgrpo}) degrades accuracy to 53.1\%.
While omitting normalization can mitigate length penalty in static arithmetic mean, it introduces scale variance as the geometry shifts. 

\paragraph{Comparison with Heuristic Adaptation Strategies.}
To validate the necessity of our signal-based (clip-aware) adaptation, we explored three intuitive heuristic baselines for determining $p$: Length-based, Entropy-based, and Step-based Annealing (see Appendix \ref{supp:heuristics} for implementation details and Appendix \ref{supp:heuristic_results} for quantitative results).
We observed a divergence in performance: while the Length-based and Schedule-based strategies failed to surpass the static GMPO baseline, the Entropy-based approach slightly outperformed GMPO (53.0\% vs. 52.7\%), validating the premise that aggregation geometry should adapt to signal uncertainty.
However, even the Entropy heuristic still trails our proposed Clip-aware ESS mechanism (54.2\%) by a significant margin.
The limitation lies in the reliance on \emph{indirect proxies}: high entropy or long sequence length does not strictly correlate with optimization instability.
In contrast, our Clip-aware ESS mechanism directly measures \emph{trust-region saturation}, allowing PMPO to maintain aggressive arithmetic updates even for complex or high-entropy trajectories, provided they remain strictly within the trust region.
\paragraph{Evolution of Training Statistics.}
We analyze the training dynamics in Fig. \ref{fig:p}.
The \textbf{entropy curve} shows a desirable pattern: PMPO maintains higher entropy than GMPO in the early stage ($<200$ steps). In RL, a slower entropy decay is generally preferred as it ensures sufficient exploration and prevents premature convergence to local optima. Our adaptive $p$ achieves this by naturally sustaining diversity when the model is unstable.
In the later stage, PMPO's entropy drops lower than GMPO. This is expected behavior: while GMPO's fixed geometric mean forces the policy to remain conservative, PMPO allows the model to assign higher confidence to correct paths once they are reliable.
We also observe that PMPO's importance ratios exhibit significantly smaller fluctuations and a smoother trend compared to baselines (GRPO and GMPO). This enhanced stability stems from our adaptive mechanism, which automatically dampens extreme outliers by lowering $p$ (shifting towards geometric mean), avoiding the variance spikes seen in fixed aggregations.
The reward curve confirms that this lower entropy corresponds to stable convergence rather than overfitting.
Additionally, the adaptive exponent $p$ on DeepSeek-R1-Distill-Qwen-7B mirrors the trajectory on Qwen-Math-7B but stabilizes at slightly higher values. This confirms that our mechanism generalizes: stronger models produce more reliable signals (less clipping), allowing PMPO to adopt more aggressive, arithmetic-like updates.
\vspace{-1mm}
\section{Conclusion}
\vspace{-1mm}
In this work, we bridge the gap between arithmetic and geometric aggregation in group-based RL by introducing \textbf{Power-Mean Policy Optimization (PMPO)}.
We argue that the reliance on a fixed aggregation geometry is a fundamental bottleneck in handling the evolving and heterogeneous nature of mathematical reasoning traces.
To resolve this, we propose the \emph{Clip-aware ESS Matching} mechanism, a principled approach that dynamically adapts the power-mean exponent $p$ based on trust-region saturation.
This mechanism effectively acts as an automatic regulator: it enables aggressive, arithmetic-like updates for reliable signals while seamlessly retreating to conservative, geometric-like stabilization when instability is detected.
Theoretically, we demonstrate that tuning $p$ is equivalent to adjusting the inverse temperature of a softmax attention mechanism over policy gradients.
Extensive experiments across multiple model scales confirm that PMPO consistently outperforms static methods with a new state-of-the-art for online RLHF.

\bibliography{example_paper}

@article{deepseek,
  title={Deepseek-r1: Incentivizing reasoning capability in llms via reinforcement learning},
  author={Guo, Daya and Yang, Dejian and Zhang, Haowei and Song, Junxiao and Zhang, Ruoyu and Xu, Runxin and Zhu, Qihao and Ma, Shirong and Wang, Peiyi and Bi, Xiao and others},
  journal={arXiv preprint arXiv:2501.12948},
  year={2025}
}

@article{grpo,
  title={Deepseekmath: Pushing the limits of mathematical reasoning in open language models},
  author={Shao, Zhihong and Wang, Peiyi and Zhu, Qihao and Xu, Runxin and Song, Junxiao and Bi, Xiao and Zhang, Haowei and Zhang, Mingchuan and Li, YK and Wu, Yang and others},
  journal={arXiv preprint arXiv:2402.03300},
  year={2024}
}

@article{raft,
  title={A minimalist approach to llm reasoning: from rejection sampling to reinforce},
  author={Xiong, Wei and Yao, Jiarui and Xu, Yuhui and Pang, Bo and Wang, Lei and Sahoo, Doyen and Li, Junnan and Jiang, Nan and Zhang, Tong and Xiong, Caiming and others},
  journal={arXiv preprint arXiv:2504.11343},
  year={2025}
}

@article{aapo,
  title={AAPO: Enhance the Reasoning Capabilities of LLMs with Advantage Momentum},
  author={Xiong, Jian and Zhou, Jingbo and Ye, Jingyong and Dou, Dejing},
  journal={arXiv preprint arXiv:2505.14264},
  year={2025}
}

@article{seedgrpo,
  title={Seed-grpo: Semantic entropy enhanced grpo for uncertainty-aware policy optimization},
  author={Chen, Minghan and Chen, Guikun and Wang, Wenguan and Yang, Yi},
  journal={arXiv preprint arXiv:2505.12346},
  year={2025}
}

@article{intuitor,
  title={Learning to reason without external rewards},
  author={Zhao, Xuandong and Kang, Zhewei and Feng, Aosong and Levine, Sergey and Song, Dawn},
  journal={arXiv preprint arXiv:2505.19590},
  year={2025}
}

@article{bnpo,
  title={BNPO: Beta Normalization Policy Optimization},
  author={Xiao, Changyi and Zhang, Mengdi and Cao, Yixin},
  journal={arXiv preprint arXiv:2506.02864},
  year={2025}
}

@article{empo,
  title={Right question is already half the answer: Fully unsupervised llm reasoning incentivization},
  author={Zhang, Qingyang and Wu, Haitao and Zhang, Changqing and Zhao, Peilin and Bian, Yatao},
  journal={arXiv preprint arXiv:2504.05812},
  year={2025}
}

@article{sgrpo,
  title={S-GRPO: Early Exit via Reinforcement Learning in Reasoning Models},
  author={Dai, Muzhi and Yang, Chenxu and Si, Qingyi},
  journal={arXiv preprint arXiv:2505.07686},
  year={2025}
}

@article{cppo,
  title={Cppo: Accelerating the training of group relative policy optimization-based reasoning models},
  author={Lin, Zhihang and Lin, Mingbao and Xie, Yuan and Ji, Rongrong},
  journal={arXiv preprint arXiv:2503.22342},
  year={2025}
}

@article{dler,
  title={Dler: Doing length penalty right-incentivizing more intelligence per token via reinforcement learning},
  author={Liu, Shih-Yang and Dong, Xin and Lu, Ximing and Diao, Shizhe and Liu, Mingjie and Chen, Min-Hung and Yin, Hongxu and Wang, Yu-Chiang Frank and Cheng, Kwang-Ting and Choi, Yejin and others},
  journal={arXiv preprint arXiv:2510.15110},
  year={2025}
}

@misc{qwq,
  title={Qwq-32b: Embracing the power of reinforcement learning},
  author={Team, Qwen},
  year={2025},
  publisher={March}
}

@article{gfpo,
  title={Sample more to think less: Group filtered policy optimization for concise reasoning},
  author={Shrivastava, Vaishnavi and Awadallah, Ahmed and Balachandran, Vidhisha and Garg, Shivam and Behl, Harkirat and Papailiopoulos, Dimitris},
  journal={arXiv preprint arXiv:2508.09726},
  year={2025}
}

@article{grpolambda,
  title={Stable Reinforcement Learning for Efficient Reasoning},
  author={Dai, Muzhi and Liu, Shixuan and Si, Qingyi},
  journal={arXiv preprint arXiv:2505.18086},
  year={2025}
}

@article{grpolead,
  title={Grpo-lead: A difficulty-aware reinforcement learning approach for concise mathematical reasoning in language models},
  author={Zhang, Jixiao and Zuo, Chunsheng},
  journal={arXiv preprint arXiv:2504.09696},
  year={2025}
}

@article{opo,
  title={On-Policy RL with Optimal Reward Baseline},
  author={Hao, Yaru and Dong, Li and Wu, Xun and Huang, Shaohan and Chi, Zewen and Wei, Furu},
  journal={arXiv preprint arXiv:2505.23585},
  year={2025}
}

@article{repo,
  title={RePO: Replay-Enhanced Policy Optimization},
  author={Li, Siheng and Zhou, Zhanhui and Lam, Wai and Yang, Chao and Lu, Chaochao},
  journal={arXiv preprint arXiv:2506.09340},
  year={2025}
}

@article{srpo,
  title={Srpo: A cross-domain implementation of large-scale reinforcement learning on llm},
  author={Zhang, Xiaojiang and Wang, Jinghui and Cheng, Zifei and Zhuang, Wenhao and Lin, Zheng and Zhang, Minglei and Wang, Shaojie and Cui, Yinghan and Wang, Chao and Peng, Junyi and others},
  journal={arXiv preprint arXiv:2504.14286},
  year={2025}
}

@article{ess,
  title   = {Sequential Imputations and {B}ayesian Missing Data Problems},
  author  = {Kong, Augustine and Liu, Jun S. and Wong, Wing Hung},
  journal = {Journal of the American Statistical Association},
  volume  = {89},
  number  = {425},
  pages   = {278--288},
  year    = {1994}
}

@article{qwen3,
  title={Qwen3 technical report},
  author={Yang, An and Li, Anfeng and Yang, Baosong and Zhang, Beichen and Hui, Binyuan and Zheng, Bo and Yu, Bowen and Gao, Chang and Huang, Chengen and Lv, Chenxu and others},
  journal={arXiv preprint arXiv:2505.09388},
  year={2025}
}

@article{aime24,
  title={Numinamath: The largest public dataset in ai4maths with 860k pairs of competition math problems and solutions},
  author={Li, Jia and Beeching, Edward and Tunstall, Lewis and Lipkin, Ben and Soletskyi, Roman and Huang, Shengyi and Rasul, Kashif and Yu, Longhui and Jiang, Albert Q and Shen, Ziju and others},
  journal={Hugging Face repository},
  volume={13},
  number={9},
  pages={9},
  year={2024}
}

@article{qwen2.5,
  author       = {An Yang and
                  Baosong Yang and
                  Beichen Zhang and
                  Binyuan Hui and
                  Bo Zheng and
                  Bowen Yu and
                  Chengyuan Li and
                  Dayiheng Liu and
                  Fei Huang and
                  Haoran Wei and
                  Huan Lin and
                  Jian Yang and
                  Jianhong Tu and
                  Jianwei Zhang and
                  Jianxin Yang and
                  Jiaxi Yang and
                  Jingren Zhou and
                  Junyang Lin and
                  Kai Dang and
                  Keming Lu and
                  Keqin Bao and
                  Kexin Yang and
                  Le Yu and
                  Mei Li and
                  Mingfeng Xue and
                  Pei Zhang and
                  Qin Zhu and
                  Rui Men and
                  Runji Lin and
                  Tianhao Li and
                  Tingyu Xia and
                  Xingzhang Ren and
                  Xuancheng Ren and
                  Yang Fan and
                  Yang Su and
                  Yichang Zhang and
                  Yu Wan and
                  Yuqiong Liu and
                  Zeyu Cui and
                  Zhenru Zhang and
                  Zihan Qiu},
  title        = {Qwen2.5 Technical Report},
  journal      = {CoRR},
  volume       = {abs/2412.15115},
  year         = {2024},
  url          = {https://doi.org/10.48550/arXiv.2412.15115},
  doi          = {10.48550/ARXIV.2412.15115},
  eprinttype    = {arXiv},
  eprint       = {2412.15115},
  bibsource    = {dblp computer science bibliography, https://dblp.org}
}

@inproceedings{oly,
  title={Olympiadbench: A challenging benchmark for promoting agi with olympiad-level bilingual multimodal scientific problems},
  author={He, Chaoqun and Luo, Renjie and Bai, Yuzhuo and Hu, Shengding and Thai, Zhen and Shen, Junhao and Hu, Jinyi and Han, Xu and Huang, Yujie and Zhang, Yuxiang and others},
  booktitle={Proceedings of the 62nd Annual Meeting of the Association for Computational Linguistics (Volume 1: Long Papers)},
  pages={3828--3850},
  year={2024}
}

@article{math,
  title={Measuring mathematical problem solving with the math dataset},
  author={Hendrycks, Dan and Burns, Collin and Kadavath, Saurav and Arora, Akul and Basart, Steven and Tang, Eric and Song, Dawn and Steinhardt, Jacob},
  journal={arXiv preprint arXiv:2103.03874},
  year={2021}
}

@article{minerva,
  title={Solving quantitative reasoning problems with language models},
  author={Lewkowycz, Aitor and Andreassen, Anders and Dohan, David and Dyer, Ethan and Michalewski, Henryk and Ramasesh, Vinay and Slone, Ambrose and Anil, Cem and Schlag, Imanol and Gutman-Solo, Theo and others},
  journal={Advances in neural information processing systems},
  volume={35},
  pages={3843--3857},
  year={2022}
}

@article{gmpo,
  title={Geometric-mean policy optimization},
  author={Zhao, Yuzhong and Liu, Yue and Liu, Junpeng and Chen, Jingye and Wu, Xun and Hao, Yaru and Lv, Tengchao and Huang, Shaohan and Cui, Lei and Ye, Qixiang and others},
  journal={arXiv preprint arXiv:2507.20673},
  year={2025}
}

@article{gspo,
  title={Group sequence policy optimization},
  author={Zheng, Chujie and Liu, Shixuan and Li, Mingze and Chen, Xiong-Hui and Yu, Bowen and Gao, Chang and Dang, Kai and Liu, Yuqiong and Men, Rui and Yang, An and others},
  journal={arXiv preprint arXiv:2507.18071},
  year={2025}
}

@article{prime,
  title={Process reinforcement through implicit rewards},
  author={Cui, Ganqu and Yuan, Lifan and Wang, Zefan and Wang, Hanbin and Zhang, Yuchen and Chen, Jiacheng and Li, Wendi and He, Bingxiang and Fan, Yuchen and Yu, Tianyu and others},
  journal={arXiv preprint arXiv:2502.01456},
  year={2025}
}

@article{gpg,
  title={Gpg: A simple and strong reinforcement learning baseline for model reasoning},
  author={Chu, Xiangxiang and Huang, Hailang and Zhang, Xiao and Wei, Fei and Wang, Yong},
  journal={arXiv preprint arXiv:2504.02546},
  year={2025}
}

@article{eurus,
  title={Advancing llm reasoning generalists with preference trees},
  author={Yuan, Lifan and Cui, Ganqu and Wang, Hanbin and Ding, Ning and Wang, Xingyao and Deng, Jia and Shan, Boji and Chen, Huimin and Xie, Ruobing and Lin, Yankai and others},
  journal={arXiv preprint arXiv:2404.02078},
  year={2024}
}

@article{open,
  title={Open-reasoner-zero: An open source approach to scaling up reinforcement learning on the base model},
  author={Hu, Jingcheng and Zhang, Yinmin and Han, Qi and Jiang, Daxin and Zhang, Xiangyu and Shum, Heung-Yeung},
  journal={arXiv preprint arXiv:2503.24290},
  year={2025}
}

@article{simplerl,
  title={Simplerlzoo: Investigating and taming zero reinforcement learning for open base models in the wild. CoRR, abs/2503.18892, 2025. doi: 10.48550},
  author={Zeng, Weihao and Huang, Yuzhen and Liu, Qian and Liu, Wei and He, Keqing and Ma, Zejun and He, Junxian},
  journal={arXiv preprint ARXIV.2503.18892}
}

@article{dapo,
  title={Dapo: An open-source llm reinforcement learning system at scale},
  author={Yu, Qiying and Zhang, Zheng and Zhu, Ruofei and Yuan, Yufeng and Zuo, Xiaochen and Yue, Yu and Dai, Weinan and Fan, Tiantian and Liu, Gaohong and Liu, Lingjun and others},
  journal={arXiv preprint arXiv:2503.14476},
  year={2025}
}

@article{drgrpo,
  title={Understanding r1-zero-like training: A critical perspective},
  author={Liu, Zichen and Chen, Changyu and Li, Wenjun and Qi, Penghui and Pang, Tianyu and Du, Chao and Lee, Wee Sun and Lin, Min},
  journal={arXiv preprint arXiv:2503.20783},
  year={2025}
}

@article{ppo,
  title={Proximal policy optimization algorithms},
  author={Schulman, John and Wolski, Filip and Dhariwal, Prafulla and Radford, Alec and Klimov, Oleg},
  journal={arXiv preprint arXiv:1707.06347},
  year={2017}
}
\bibliographystyle{icml2025}

\newpage
\appendix
\onecolumn
\section{Proof of Power Mean Monotonicity}
\label{app:proof_monotonicity}

In this section, we provide the proof for the monotonicity of the power mean with respect to its exponent $p$.

\begin{theorem}
Let $\{x_t\}_{t=1}^n$ be a set of positive real numbers ($x_t > 0$). For any two non-zero real numbers $p_1$ and $p_2$ such that $p_1 < p_2$, the following inequality holds:
\begin{equation}
    M_{p_1}(\{x_t\}) \le M_{p_2}(\{x_t\}),
\end{equation}
where $M_p(\{x_t\}) = \left( \frac{1}{n} \sum_{t=1}^n x_t^p \right)^{\frac{1}{p}}$. Equality holds if and only if all $x_t$ are equal.
\end{theorem}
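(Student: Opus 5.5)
The plan is to reduce everything to Jensen's inequality applied to a power function, splitting by the signs of $p_1$ and $p_2$. Write $a = \frac{1}{n}\sum_t$ for the uniform average.

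\textbf{Case $0<p_1<p_2$.} Set $y_t = x_t^{p_1} > 0$ and $q = p_2/p_1 > 1$. The map $\phi(y)=y^{q}$ is strictly convex on $(0,\infty)$, so Jensen gives
\[
\Big(\tfrac{1}{n}\sum_t y_t\Big)^{q} \le \tfrac{1}{n}\sum_t y_t^{q} = \tfrac{1}{n}\sum_t x_t^{p_2}.
\]
The left side equals $M_{p_1}^{p_2}$. Raising both sides to the positive power $1/p_2$, which is increasing, yields $M_{p_1}\le M_{p_2}$.

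\textbf{Case $p_1<p_2<0$.} Again set $y_t=x_t^{p_1}$ and $q=p_2/p_1$. Now $q\in(0,1)$, so $y^{q}$ is strictly concave and Jensen reverses:
\[
\Big(\tfrac{1}{n}\sum_t y_t\Big)^{q} \ge \tfrac{1}{n}\sum_t x_t^{p_2}.
\]
Raising to $1/p_2<0$ is decreasing, so the inequality flips back and gives $M_{p_1}\le M_{p_2}$.

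Alternatively, one can use duality: $M_{-p}(\{x_t\}) = 1/M_p(\{1/x_t\})$. The negative case then follows from the positive case applied to $\{1/x_t\}$.

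\textbf{Case $p_1<0<p_2$.} Here the cleanest route goes through the geometric mean $G=\exp(\frac{1}{n}\sum_t \log x_t)$. For $p>0$, concavity of $\log$ gives
\[
\log M_p = \tfrac{1}{p}\log\Big(\tfrac{1}{n}\sum_t x_t^{p}\Big) \ge \tfrac{1}{p}\cdot\tfrac{1}{n}\sum_t p\log x_t = \log G,
\]
so $M_p \ge G$. For $p<0$, the same Jensen step, after dividing by the negative $p$, gives $M_p\le G$. Hence $M_{p_1}\le G\le M_{p_2}$.

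\textbf{Equality.} Strict convexity or concavity makes every Jensen step strict unless all $y_t$ are equal. Since $x\mapsto x^{p_1}$ is injective on $(0,\infty)$, equal $y_t$ is the same as all $x_t$ being equal. In the mixed case, equality in either comparison with $G$ likewise forces all $x_t$ to be equal. Conversely, if all $x_t=c$, then every $M_p=c$.

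The main obstacle is mostly bookkeeping: tracking when exponentiation by $1/p$ or division by $p$ reverses an inequality, and handling the mixed-sign case, where no single substitution covers both exponents. Routing that case through $G$ resolves it.
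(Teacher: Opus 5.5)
Your proof is correct and follows the paper's route: Jensen with $\phi(u)=u^{p_2/p_1}$ for $0<p_1<p_2$, and the geometric mean as a pivot for $p_1<0<p_2$. The paper invokes AM--GM and HM--GM there, which is your log-concavity step applied to $x_t^{p}$. You are also more complete than the paper, which only says the case $p_1<p_2<0$ follows ``similarly'' and never proves the equality condition; you handle both explicitly and correctly.
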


\begin{proof}
The proof relies on \textbf{Jensen's Inequality}. Recall that for a convex function $\phi$ and a random variable $Y$, $\phi(\mathbb{E}[Y]) \le \mathbb{E}[\phi(Y)]$.

\textbf{Case 1: $0 < p_1 < p_2$.} \\
Let $Y$ be a discrete random variable taking values $x_t^{p_1}$ with uniform probability $\frac{1}{n}$. Let $\alpha = \frac{p_2}{p_1}$. Since $0 < p_1 < p_2$, we have $\alpha > 1$.
Consider the function $\phi(u) = u^\alpha$ for $u > 0$. The second derivative is $\phi''(u) = \alpha(\alpha-1)u^{\alpha-2}$. Since $\alpha > 1$ and $u > 0$, $\phi''(u) > 0$, implying that $\phi(u)$ is strictly convex.

Applying Jensen's Inequality:
\begin{equation}
    \phi\left( \frac{1}{n} \sum_{t=1}^n x_t^{p_1} \right) \le \frac{1}{n} \sum_{t=1}^n \phi\left( x_t^{p_1} \right).
\end{equation}
Substituting $\phi(u) = u^{\frac{p_2}{p_1}}$:
\begin{equation}
    \left( \frac{1}{n} \sum_{t=1}^n x_t^{p_1} \right)^{\frac{p_2}{p_1}} \le \frac{1}{n} \sum_{t=1}^n \left( x_t^{p_1} \right)^{\frac{p_2}{p_1}} = \frac{1}{n} \sum_{t=1}^n x_t^{p_2}.
\end{equation}
Since $p_2 > 0$, the function $z \mapsto z^{\frac{1}{p_2}}$ is increasing. Raising both sides to the power of $\frac{1}{p_2}$ preserves the inequality:
\begin{equation}
    \left[ \left( \frac{1}{n} \sum_{t=1}^n x_t^{p_1} \right)^{\frac{p_2}{p_1}} \right]^{\frac{1}{p_2}} \le \left( \frac{1}{n} \sum_{t=1}^n x_t^{p_2} \right)^{\frac{1}{p_2}}.
\end{equation}
Simplifying the left-hand side:
\begin{equation}
    \left( \frac{1}{n} \sum_{t=1}^n x_t^{p_1} \right)^{\frac{1}{p_1}} \le \left( \frac{1}{n} \sum_{t=1}^n x_t^{p_2} \right)^{\frac{1}{p_2}}.
\end{equation}
Thus, $M_{p_1}(\{x_t\}) \le M_{p_2}(\{x_t\})$.

\textbf{Case 2: $p_1 < 0 < p_2$.} \\
From Case 1, we know that $M_{p}$ is increasing for $p>0$. Similarly, one can prove it is increasing for $p<0$. The discontinuity at $p=0$ is handled by the limit definition of the Geometric Mean. It is a known result that:
\begin{equation}
    \lim_{p \to 0} M_p(\{x_t\}) = \left( \prod_{t=1}^n x_t \right)^{\frac{1}{n}} = G(\{x_t\}).
\end{equation}
Using the weighted AM-GM inequality (or simply GM $\le$ AM), we know $M_0 \le M_{p_2}$ for $p_2 > 0$. Similarly, HM $\le$ GM (Harmonic Mean $\le$ Geometric Mean) implies $M_{p_1} \le M_0$ for $p_1 < 0$. By transitivity, the monotonicity holds across zero.

For the purpose of PMPO, where $p$ is typically adapted within $(0, 1]$, Case 1 is the primary justification.
\end{proof}

\section{Theoretical Analysis of ESS-Matching}
\label{app:theoretical_analysis}

In this section, we provide a comprehensive analysis of the proposed Clip-aware ESS Matching mechanism. We strictly motivate the design choice via scale invariance, prove the mathematical validity of the solver, and visually illustrate the mechanism.

\subsection{Motivation: The Necessity of Scale Invariance}
\label{app:scale_invariance}

A natural question arises: \textit{Why not map the clipping fraction directly to the exponent $p$ (e.g., via a linear decay $p = 1 - f_{\mathrm{clip}}$)?} We argue that such a direct mapping is suboptimal because $p$ lacks \textbf{scale invariance}.
Recall from Eq.~(\ref{gradient}) in the main text that the gradient of the power mean $\hat{r}_p$ behaves as a softmax function over the log-probability changes $\Delta \ell_t$, with an inverse temperature governed by $p$:
\begin{equation}
    \frac{\partial \hat{r}_p}{\partial \Delta \ell_j} \propto \mathrm{softmax}\left( \frac{\Delta \ell}{1/p} \right)_j = \frac{\exp(p \Delta \ell_j)}{\sum_k \exp(p \Delta \ell_k)}.
\end{equation}
The "sharpness" or concentration of this gradient distribution depends on the product $p \cdot \Delta \ell_t$. This implies that for a fixed $p$, the effective behavior of the aggregation shifts dramatically depending on the magnitude of the log-probability differences: 1) If $\Delta \ell_t$ values are large (e.g., early in training or for high-confidence models), a moderate $p$ (e.g., 0.5) acts aggressively, creating a highly peaked distribution. 2)If $\Delta \ell_t$ values are small (e.g., near convergence), the same $p=0.5$ acts conservatively, approaching a uniform distribution.
Therefore, a static mapping from clipping to $p$ would lead to inconsistent update behaviors across different training stages and prompts.
By contrast, the \textbf{Effective Sample Size (ESS)} is inherently scale-invariant. It measures the sparsity of the normalized weight distribution $w$, independent of the absolute magnitude of the input logits. By defining our target in terms of ESS, we ensure that the \emph{concentration of the policy update} is consistently regulated by the signal reliability (clipping), regardless of the evolving scale of the latent log-probability differences. The solver then finds the specific $p$ required to achieve this desired sparsity given the current batch's statistics.

To empirically validate this theoretical analysis, we implemented a baseline variant named \textbf{PMPO-Direct}, which uses a naive linear mapping rule: $p_k = 1 - f_{\mathrm{clip}}^{(k)}$. As shown in Table~\ref{tab:app_direct_comparison}, while PMPO-Direct (52.8\%) marginally outperforms the static GMPO baseline, it significantly underperforms our ESS-based PMPO (54.2\%). This performance gap confirms that decoupling the geometry adaptation from the signal scale via ESS is critical for stability and efficiency.

\begin{table}[htbp]
    \centering
    \caption{\textbf{Impact of Scale Invariance.} Comparison between naive Direct Mapping and our proposed ESS Matching on Qwen2.5-Math-7B. The ESS mechanism significantly outperforms the linear heuristic.}
    \label{tab:app_direct_comparison}
    \setlength{\tabcolsep}{8pt}
    \begin{tabular}{lcccc}
        \toprule
        \textbf{Method} & \textbf{Mechanism} & \textbf{Scale Invariant?} & \textbf{Avg. Accuracy} & \textbf{$\Delta$ vs. GMPO} \\
        \midrule
        GMPO & Fixed ($p \to 0$) & N/A & 52.7 & - \\
        PMPO-Direct & Linear Mapping ($p \propto 1 - f_{\mathrm{clip}}$) & \xmark & 53.2 & +0.5 \\
        \textbf{PMPO (Ours)} & \textbf{Clip-aware ESS Matching} & \cmark & \textbf{54.2} & \textbf{+1.5} \\
        \bottomrule
    \end{tabular}
\end{table}

\subsection{Formal Proof of ESS Monotonicity}
\label{app:proof}
The validity of using the bisection method to solve for $p$ relies on the strict monotonicity of the ESS function.

\begin{proposition}
Let $z_t = \tilde{\Delta}\ell_t$ be the clipped log-probability differences for $t=1,\dots,n$. The normalized Effective Sample Size function $\mathrm{ESS}_{\mathrm{norm}}(p)$ defined by the induced weights $w_t(p) = \mathrm{softmax}(p z_t)$ is monotonically decreasing for $p \ge 0$. Specifically, $\frac{\partial \mathrm{ESS}_{\mathrm{norm}}(p)}{\partial p} \le 0$, with strict inequality unless all $z_t$ are equal.
\end{proposition}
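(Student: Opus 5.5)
Since $\mathrm{ESS}_{\mathrm{norm}}(p) = \frac{1}{n}\,\big(\sum_t w_t(p)^2\big)^{-1}$ and $x\mapsto 1/x$ is decreasing on $(0,\infty)$, it suffices to show that $S(p) := \sum_{t} w_t(p)^2$ is nondecreasing in $p$, with strict increase unless all $z_t$ coincide. We rewrite $S$ through log-partition functions. Let $Z(p) = \sum_t e^{p z_t}$. Then
\begin{equation}
S(p) = \frac{\sum_t e^{2p z_t}}{Z(p)^2} = \frac{Z(2p)}{Z(p)^2},
\qquad
\log S(p) = \Lambda(2p) - 2\Lambda(p),
\end{equation}
where $\Lambda(p) = \log Z(p)$ is the cumulant generating function of the empirical distribution of $\{z_t\}$.

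Differentiating gives $\frac{d}{dp}\log S(p) = 2\big(\Lambda'(2p) - \Lambda'(p)\big)$. Here $\Lambda'(q) = \sum_t \mathrm{softmax}(q z)_t\, z_t = \mathbb{E}_{w(q)}[z]$ is the mean of $z$ under the tilted weights. Its derivative is $\Lambda''(q) = \mathrm{Var}_{w(q)}(z) \ge 0$, so $\Lambda'$ is nondecreasing.

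For $p \ge 0$ we have $2p \ge p$, hence $\Lambda'(2p) \ge \Lambda'(p)$, which gives $\frac{dS}{dp} \ge 0$ and therefore $\frac{d\,\mathrm{ESS}_{\mathrm{norm}}}{dp} \le 0$. Explicitly,
\begin{equation}
\frac{\partial \mathrm{ESS}_{\mathrm{norm}}}{\partial p} = -\frac{2\big(\Lambda'(2p)-\Lambda'(p)\big)}{n\,S(p)}.
\end{equation}

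The main obstacle is the strictness claim, which must be handled carefully.
\begin{itemize}
\item \emph{Non-constant $z$, $p>0$.} Every softmax weight is positive, so $\mathrm{Var}_{w(q)}(z) > 0$ for every $q$. Then $\Lambda'$ is strictly increasing and $\Lambda'(2p) > \Lambda'(p)$, so the derivative is strictly negative.
\item \emph{Non-constant $z$, $p=0$.} Here $2p = p$, so the derivative is exactly $0$. This is consistent with the uniform distribution being the maximizer of ESS.
\item \emph{Constant $z$.} The derivative is identically $0$ and $\mathrm{ESS}_{\mathrm{norm}}\equiv 1$.
\end{itemize}
So the precise statement I would prove is: the derivative is $\le 0$ for all $p\ge 0$, and strictly negative for all $p>0$ whenever the $z_t$ are not all equal. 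This is enough for bisection on $[p_{\min},p_{\max}]\subset(0,\infty)$. I would note explicitly that "strict inequality" in the proposition must exclude the single point $p=0$. Strict monotonicity on $[0,\infty)$ itself still holds, because the derivative vanishes only at an isolated point.

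Alternatively, one can avoid the CGF and compute the derivative directly. Using $w_t' = w_t(z_t - \bar z)$ with $\bar z = \sum_s w_s z_s$, we get $S'(p) = 2\sum_t w_t^2 (z_t - \bar z) = 2\,\mathrm{Cov}_{w}(w, z)$. This is nonnegative because $w_t$ is an increasing function of $z_t$ when $p \ge 0$, by Chebyshev's sum inequality. The CGF route above is cleaner for establishing strictness, so I would present that one.
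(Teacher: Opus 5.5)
Your proof is correct and is essentially the paper's argument. The paper writes $S'(p)=2S(p)\,(\mathbb{E}_q[z]-\mathbb{E}_w[z])$, where $q\propto e^{2pz}$ is the tilt at $2p$, and then uses the fact that the tilted mean has derivative equal to the variance; that is exactly your identity $\frac{d}{dp}\log S = 2(\Lambda'(2p)-\Lambda'(p))$ written in cumulant-generating-function form.

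Your caveat that strictness fails at $p=0$ is also correct: there the derivative vanishes even for non-constant $z$. The paper's statement and proof overlook this, since they claim equality holds only when all $z_t$ are equal.
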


\begin{proof}
Recall the definition of the unnormalized ESS:
\begin{equation}
    E(p) = \frac{1}{\sum_{t=1}^n w_t(p)^2},   \text{where } w_t(p) = \frac{e^{p z_t}}{\sum_{j=1}^n e^{p z_j}}.
\end{equation}
To analyze the monotonicity, it suffices to analyze the denominator $S(p) = \sum_{t=1}^n w_t(p)^2$. If $S(p)$ is monotonically increasing, then $E(p)$ is monotonically decreasing.

Let us compute the derivative of $w_t(p)$ with respect to $p$:
\begin{equation}
    \frac{\partial w_t}{\partial p} = w_t (z_t - \bar{z}_p),   \text{where } \bar{z}_p = \sum_{k=1}^n w_k z_k = \mathbb{E}_w[z].
\end{equation}
Now, differentiate $S(p)$:
\begin{equation}
\begin{aligned}
    \frac{\partial S(p)}{\partial p} &= \sum_{t=1}^n 2 w_t \frac{\partial w_t}{\partial p} \\
    &= 2 \sum_{t=1}^n w_t^2 (z_t - \bar{z}_p) \\
    &= 2 \left( \sum_{t=1}^n w_t^2 z_t - \left(\sum_{t=1}^n w_t^2\right) \bar{z}_p \right) \\
    &= 2 S(p) \left( \frac{\sum_{t=1}^n w_t^2 z_t}{\sum_{t=1}^n w_t^2} - \sum_{k=1}^n w_k z_k \right).
\end{aligned}
\end{equation}
Let us define a new probability distribution (the ``squared'' distribution) $q$, where $q_t = \frac{w_t^2}{S(p)}$. Note that since large $w_t$ become relatively larger when squared, $q$ places more mass on indices with larger $z_t$ compared to $w$.
The term in the parenthesis becomes:
\begin{equation}
    \Delta_z = \mathbb{E}_q[z] - \mathbb{E}_w[z].
\end{equation}
Since $w_t \propto e^{p z_t}$, $w_t$ is a strictly increasing function of $z_t$ (for $p>0$). The distribution $q_t \propto w_t^2 \propto e^{2p z_t}$ can be seen as the distribution resulting from doubling the temperature parameter (from $p$ to $2p$).
It is a known property of the exponential family that the derivative of the expected sufficient statistic with respect to the natural parameter is the variance of the sufficient statistic. Specifically, $\frac{\partial}{\partial \theta} \mathbb{E}_{p_\theta}[z] = \operatorname{Var}_{p_\theta}[z] \ge 0$. Since we are comparing the expectation under parameter $2p$ (distribution $q$) versus $p$ (distribution $w$) and $2p > p$ (for $p>0$), the strict monotonicity implies:
\begin{equation}
\mathbb{E}_q[z] \ge \mathbb{E}_w[z].
\end{equation} Thus, for a ``sharper'' distribution $q$ (which corresponds to parameter $2p$) versus $w$ (parameter $p$), the expectation satisfies:
\begin{equation}
    \mathbb{E}_q[z] \ge \mathbb{E}_w[z].
\end{equation}
Equality holds if and only if all $z_t$ are identical (in which case the distribution is uniform regardless of $p$).

Therefore, $\frac{\partial S(p)}{\partial p} \ge 0$, which implies $S(p)$ is increasing. Consequently, $\mathrm{ESS}(p) = 1/S(p)$ is monotonically decreasing. This guarantees that for any target $\eta^\star \in (\frac{1}{n}, 1]$, there exists a unique $p$, validating the use of the bisection method.
\end{proof}

\subsection{Visual Illustration} 
\label{app:visual_illustration}
\begin{figure}[htbp]
    \centering
    \includegraphics[width=0.6\linewidth]{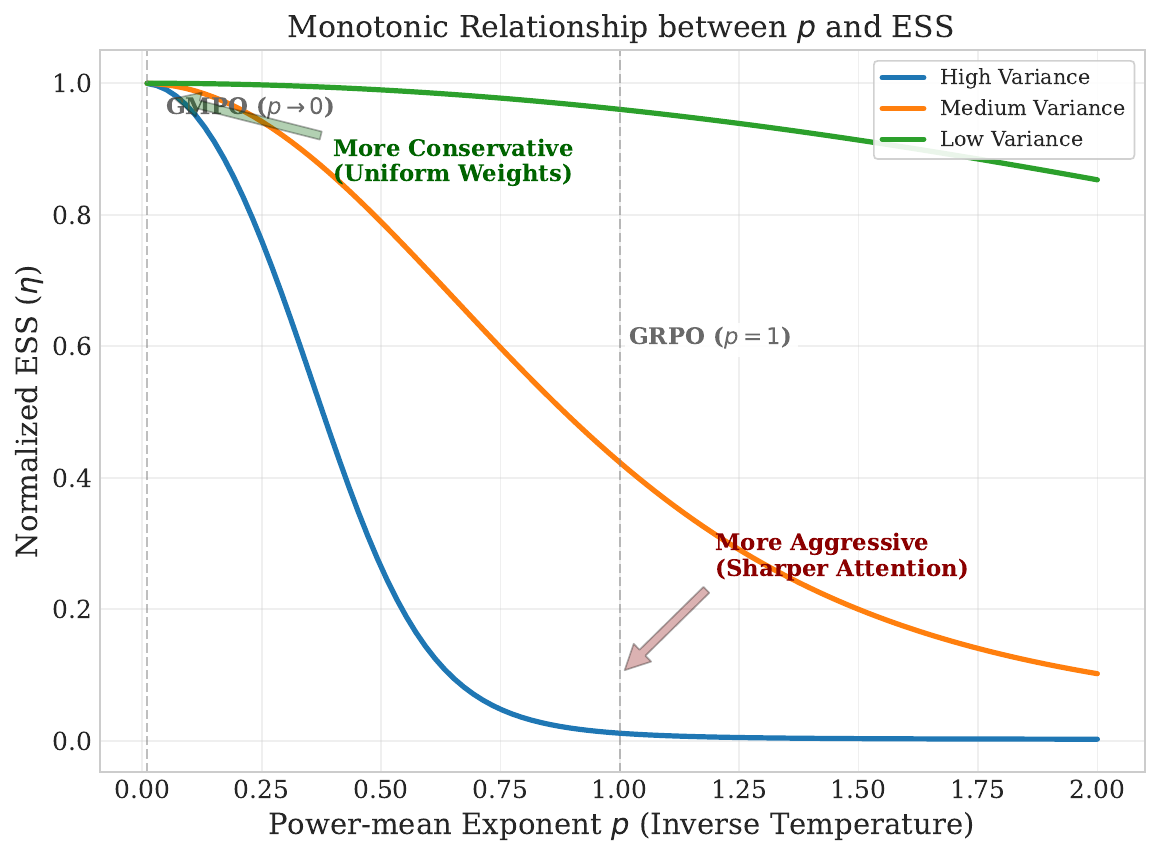}
    \caption{\textbf{Visual Illustration of the Monotonic Relationship between $p$ and ESS.} We simulate token-level log-probability differences $\tilde{\Delta}\ell$ with different variances. The curves demonstrate that increasing $p$ consistently reduces the Effective Sample Size. The varying slopes highlight that the sensitivity of ESS to $p$ depends on the signal variance, necessitating the proposed ESS-matching strategy.}
    \label{fig:ess_monotonicity}
\end{figure}
To complement the formal proof, we provide a numerical demonstration to visually illustrate the relationship between $p$ and ESS under different signal conditions. As analyzed in the proof, $p$ acts as the inverse temperature, governing the sharpness of the weight distribution.
Fig. \ref{fig:ess_monotonicity} plots the ESS curves using simulated log-probability distributions with varying variances (representing ``decisive'', ``typical'', and ``noisy'' reasoning steps). The simulation confirms that the strictly monotonic relationship holds regardless of the data distribution: \textbf{1) Monotonicity:} In all scenarios, increasing $p$ (more aggressive) strictly decreases the Effective Sample Size (more concentrated weights). \textbf{2) Scale Adaptation:} High-variance trajectories (blue line) exhibit a sharp drop in ESS even at moderate $p$, whereas low-variance trajectories (green line) require a much larger $p$ to reduce ESS. This visually justifies the necessity of our adaptive mechanism: a fixed $p$ (e.g., $p=1$) would yield inconsistent aggregation behaviors across different

\section{Analysis on Relaxing the Upper Bound ($p > 1$)}
\label{app:unbounded_p}

In our main experiments, we restricted $p < 1$. A natural question is whether allowing larger upper bound threshold provides any benefit.
Mathematically, while $p=1$ (Arithmetic Mean) considers all tokens linearly, using $p > 1$ disproportionately assigns higher weights to tokens with larger importance ratios. As $p$ increases, the aggregation logic shifts towards a \emph{maximum} operation, where the gradient is dominated by the single token with the largest probability shift, effectively ignoring the rest of the sequence.
To test this, we relaxed the upper bound $p_{\max}$ to $2.0$ and $4.0$ on Qwen2.5-Math-7B. We observed that the adaptive solver rarely reached these high values. Even with the bound set to 4.0, the mean value of $p$ stabilized around $\mathbf{1.2}$. This indicates that even a small increase in $p$ drastically reduces the Effective Sample Size (ESS), quickly meeting the sparsity target without needing larger exponents.
Despite the small shift in the average $p$ (from 0.8 to $\approx 1.2$), Tab. \ref{tab:p_unbounded} shows a consistent drop in performance. 1) The degradation confirms that $p=1$ is a critical boundary. Any $p > 1$ amplifies the influence of outlier tokens (those with sudden high probability ratios). In mathematical reasoning, a correct answer depends on the coherence of the entire chain, not just a few isolated tokens with high scores. 2) By over-weighting a few tokens, $p > 1$ reduces the effective number of tokens contributing to the gradient. This increases the variance of the update, making the training less stable compared to the unbiased arithmetic mean.
Therefore, we conclude that $p=1$ is the optimal upper limit for this task.
\begin{table}[htbp]
    \centering
    \caption{\textbf{Effect of $p > 1$.} Performance degrades when allowing $p$ to exceed 1.0, even though the average $p$ remains low ($\approx 1.2$).}
    \label{tab:p_unbounded}
    \setlength{\tabcolsep}{5pt}
    \begin{tabular}{lccccccc}
        \toprule
        \textbf{Config} & \textbf{Range} & \textbf{AIME24} & \textbf{AMC} & \textbf{MATH500} & \textbf{Minerva} & \textbf{Oly.} & \textbf{Avg.} \\
        \midrule
        \textbf{PMPO (Default)} & $p \in [0.01, 1.0]$ & \textbf{36.7} & \textbf{68.7} & \textbf{83.8} & \textbf{34.9} & \textbf{46.7} & \textbf{54.2} \\
        PMPO (Relaxed) & $p \in [0.01, 2.0]$ & 33.3 & 66.3 & 82.8 & 33.5 & 45.1 & 52.2 \\
        PMPO (Extreme) & $p \in [0.01, 4.0]$ & 33.3 & 65.1 & 82.6 & 33.2 & 44.8 & 51.8 \\
        \bottomrule
    \end{tabular}
\end{table}
\section{Alternative Heuristics for $p$-Adaptation}
\label{supp:heuristics}

In the early stages of our research, we investigated several heuristic strategies for adapting the power-mean exponent $p$ based on trajectory statistics or training progress. Although our final \emph{Clip-aware ESS} method proved superior, we document these heuristics here for completeness. The implementation details for these variants are as follows:

\subsection{Sequence Length Adaptation.}
Motivated by the observation that longer reasoning chains often accumulate more variance and are prone to "outlier" tokens, we hypothesized that longer sequences require more conservative aggregation (lower $p$, closer to geometric mean).
We maintain an exponential moving average (EMA) of the sequence lengths $\mu_L$ and their variance $\sigma_L^2$. For a given trajectorywith length $L_k$, we compute its Z-score $z_k = (L_k - \mu_L)/\sigma_L$. The exponent $p$ is determined via a sigmoid mapping:
\begin{equation}
    p_k = p_{\min} + (p_{\max} - p_{\min}) \cdot \sigma(-\alpha \cdot z_k),
\end{equation}
where $\alpha$ is a sensitivity hyperparameter. Under this formulation, trajectories significantly longer than the running average ($z_k > 0$) result in a smaller $p$, enforcing stricter geometric-like regularization.

\subsection{Entropy-based Adaptation.}
We explored using the policy's Shannon entropy as a proxy for uncertainty. We hypothesize that high-entropy trajectories (indicating model uncertainty) require conservative updates, while low-entropy trajectories (high confidence) support aggressive arithmetic updates.
Similar to the length-based approach, we compute the mean per-token entropy $H_k$ for the trajectoryand normalize it against the historical EMA statistics to obtain a Z-score $z_{H}$. The exponent is adapted as:
\begin{equation}
    p_k = p_{\min} + (p_{\max} - p_{\min}) \cdot \sigma(-\alpha \cdot z_{H}).
\end{equation}
High uncertainty ($z_H > 0$) drives $p \to p_{\min}$ (Geometric regime), while high confidence ($z_H < 0$) drives $p \to p_{\max}$ (Arithmetic regime).

\subsection{Time-Dependent Annealing (Step Schedule).}
We also evaluated a deterministic annealing schedule that ignores trajectory-specific features and depends solely on the training step $t$. We utilized a cosine decay schedule to transition from an aggressive arithmetic geometry to a conservative geometric geometry over the course of training:
\begin{equation}
    p_t = p_{\min} + (p_{\max} - p_{\min}) \cdot \frac{1}{2}\left(1 + \cos\left(\frac{t}{T_{\text{total}}}\pi\right)\right).
\end{equation}
This schedule initializes training with $p \approx 1$ (GRPO-like) to encourage rapid initial learning and anneals towards $p \approx 0$ (GMPO-like) to stabilize convergence in later stages.

\section{Extended Experimental Results on Heuristic Strategies}
\label{supp:heuristic_results}

In this section, we present a detailed performance comparison between our proposed Clip-aware ESS mechanism and the alternative heuristic strategies introduced in Appendix~\ref{supp:heuristics}.
Table~\ref{tab:heuristic_results} summarizes the results on Qwen2.5-Math-7B.
We observe that heuristics based on surface-level statistics yield mixed results, highlighting the difficulty of manually designing the adaptation rule.

\paragraph{Length-based Adaptation (PMPO-Length):} Conversely, penalizing longer sequences leads to a performance degradation (51.9\%) compared to GMPO. We attribute this to the fact that mathematical reasoning often requires long Chain-of-Thought (CoT) derivations to reach correct solutions. Blindly forcing a conservative geometry ($p \to 0$) for long trajectories incorrectly dampens the learning signal from valid, complex reasoning paths, effectively penalizing the model for "thinking hard."

\paragraph{Entropy-based Adaptation (PMPO-Entropy):} Among the heuristics, the entropy-based approach performs best, achieving an average accuracy of \textbf{53.0\%}, which slightly outperforms the static GMPO baseline (52.7\%). This suggests that token-level entropy is a reasonable proxy for model uncertainty: high-entropy trajectories likely contain "risky" steps that benefit from the conservative geometric aggregation ($p \to 0$), while low-entropy (confident) trajectories can safely utilize the more aggressive arithmetic mean.

\paragraph{Step Schedule (PMPO-Schedule):} The deterministic annealing schedule (52.1\%) also fails to surpass the GMPO baseline. This result indicates that the optimal aggregation geometry is highly \emph{data-dependent} rather than training-stage dependent. A rigid schedule forces the model to be aggressive or conservative regardless of the actual quality of the trajectories, leading to suboptimal updates.

\paragraph{Superiority of Clip-aware ESS.}
In contrast, our proposed \textbf{PMPO (Adaptive)} achieves \textbf{54.2\%}, significantly outperforming all heuristic variants.
Unlike length or entropy, which are indirect proxies, the clipping fraction is a \emph{direct measurement} of trust-region saturation.
It allows PMPO to distinguish between "long but reliable" trajectories (keeping $p$ high) and "short but unstable" ones (lowering $p$), thereby maximizing sample efficiency without sacrificing stability.

\begin{table*}[!t]
    \centering
    \caption{\textbf{Comparison with Heuristic Adaptation Strategies.} Average accuracy (Pass@1) on mathematical reasoning benchmarks (Qwen2.5-Math-7B).}
    \label{tab:heuristic_results}
    \setlength{\tabcolsep}{6pt}
    \begin{tabular}{lccccccc}
        \toprule
        \textbf{Method} & \textbf{Mechanism} & \textbf{AIME24} & \textbf{AMC} & \textbf{MATH500} & \textbf{Minerva} & \textbf{Oly.} & \textbf{Avg.} \\
        \midrule
        \rowcolor{gray!10} \textbf{Baselines} & & & & & & & \\
        GMPO ($p \to 0$) & Fixed & 43.3 & 61.4 & 82.0 & 33.5 & 43.6 & 52.7 \\
        GRPO ($p=1$)     & Fixed & 40.0 & 59.0 & 83.4 & 32.4 & 41.3 & 51.2 \\
        \midrule
        \rowcolor{gray!10} \textbf{Heuristics} & & & & & & & \\
        PMPO-Length   & $L_k \uparrow \Rightarrow p \downarrow$ & 33.3 & 65.1 & 82.6 & 32.0 & 46.4 & 51.9 \\
        PMPO-Entropy  & $H_k \uparrow \Rightarrow p \downarrow$ & 33.3 & 67.4 & 83.0 & 35.8 & 45.5 & 53.0 \\
        PMPO-Schedule & $t \uparrow \Rightarrow p \downarrow$   & 40.0 & 60.2 & 82.6 & 34.6 & 43.5 & 52.1 \\
        \midrule
        \rowcolor{gray!10} \textbf{Ours} & & & & & & & \\
        \textbf{PMPO (Adaptive)} & Clip-aware ESS & \textbf{36.7} & \textbf{68.7} & \textbf{83.8} & \textbf{34.9} & 
        \textbf{46.7} & \textbf{54.2} \\
        \bottomrule
    \end{tabular}
\end{table*}

\section{Computational Complexity Analysis}
\label{app:complexity}

A potential concern regarding the dynamic selection of the exponent $p$ is the additional computational overhead introduced by the iterative bisection solver. In this section, we provide a formal analysis to demonstrate that this overhead is theoretically bounded and practically negligible.
\subsection{Theoretical Complexity}

Let $n$ be the number of response tokens in a trajectory and $K$ be the group size. The computational cost of standard group-based RL methods (e.g., GRPO or GMPO) is primarily $\mathcal{O}(n)$ per trajectoryfor the aggregation step. For \textbf{PMPO}, the complexity for each trajectory is decomposed as: 1) \textbf{Clip Fraction Calculation:} Computing $f_{\mathrm{clip}}$ requires a single pass over the token-level log-probability differences, taking $\mathcal{O}(n)$. 2) \textbf{Bisection Solver:} Each iteration of the bisection method involves computing the Effective Sample Size (ESS) over $n$ tokens. For a fixed number of iterations $M$ (typically $M=10$ to achieve a precision of $10^{-3}$), the total cost is $\mathcal{O}(M \cdot n)$. 3) \textbf{Final Power-Mean Aggregation:} Once $p$ is determined, the effective ratio $\hat{r}_p$ is computed in $\mathcal{O}(n)$.
As a result, the total asymptotic complexity of the PMPO adaptive layer is $\mathcal{O}(M \cdot n)$. Since $M$ is a small constant that does not scale with the model size or the sequence length, PMPO maintains the same linear complexity as its static counterparts.

\subsection{Practical Training Overhead}

We evaluate the practical scalability of PMPO on a server equipped with \textbf{four NVIDIA A800 (80GB)} GPUs. The benchmark uses a Qwen2.5-7B base model with a global batch size of 128 (32 samples per GPU) and a sequence length of 3,000 tokens. The average wall-clock time per training iteration is: 1) \textbf{GRPO} (Arithmetic baseline): 95.6 seconds, \textbf{GMPO} (Geometric baseline): 97.1 seconds, \textbf{PMPO} (Ours): \textbf{97.8 seconds}.
The empirical findings are analyzed as follows:

\begin{enumerate}
    \item \textbf{Marginal Computational Increment:} Compared to the simplest arithmetic mean (GRPO), PMPO incurs a total latency increase of only \textbf{2\%}. When compared to the geometric baseline (GMPO), the additional overhead of our adaptive $p$-solver is a marginal \textbf{0.7\%}. This confirms that the computational bottleneck remains the forward and backward passes of the Transformer backbone, rather than the adaptive aggregation layer.
    \item \textbf{Local Computation and Parallelizability:} Our bisection solver is fully vectorized and operates independently for each trajectory. In a distributed setting, each GPU computes the optimal $p$ for its local micro-batch without requiring inter-GPU communication (e.g., no All-Reduce), ensuring that the overhead does not scale with the number of GPUs.
    \item \textbf{Numerical Stability and Efficiency:} By performing calculations in the log-domain using the Log-Sum-Exp trick, the solver converges reliably within 10 iterations. The measured 3-second difference from GRPO encompasses the cumulative cost of iterative computation, sign-corrected clipping, and kernel launch latencies.
\end{enumerate}

In summary, PMPO provides a sophisticated adaptive mechanism with a computational cost that is orders of magnitude smaller than the core model updates, establishing it as a highly efficient solution for large-scale RLAIF.

\begin{table}[!t]
    \centering
    \small
    \caption{Key training hyperparameters in our implementation. Unless stated otherwise, all compared methods use the same configuration.}
    \label{tab:oat_hparams}
    \begin{tabular}{ll}
        \toprule
        \textbf{Parameter} & \textbf{Value} \\
        \midrule
        \multicolumn{2}{l}{\textbf{Actor / rollout}} \\
        Max response length & 3000 tokens \\
        Sampling temperature & 1.0 \\
        $(top\text{-}p,\;top\text{-}k)$ & $(1.0,\;-1)$ \\
        Responses per question ($K$) & 8 \\
        \midrule
        \multicolumn{2}{l}{\textbf{Learner / optimization}} \\
        Optimizer & AdamW \\
        Adam betas $(\beta_1,\beta_2)$ & $(0.9,\;0.95)$ \\
        Weight decay & 0.0 \\
        Gradient norm clipping & 1.0 \\
        LR scheduler & Constant \\
        Learning rate & $1\times 10^{-6}$ \\
        PPO inner epochs & 1 \\
        KL loss coefficient ($\beta$) & 0.0 \\
        KL penalty coefficient & 0.0 \\
        Policy clipping parameter ($c$) & 0.4 \\
        \bottomrule
    \end{tabular}
\end{table}
\section{Hyper-parameter Setting}
\label{supp:hyper}
We largely align our hyper-parameter configuration with the training recipes of Dr.GRPO~\citep{drgrpo} to ensure a fair comparison.
\textbf{Optimization.} We use the AdamW optimizer with a constant learning rate of $1 \times 10^{-6}$ and gradient norm clipping at $1.0$. Following the DeepSeek-R1-Zero~\citep{deepseek} paradigm, we set the explicit KL penalty coefficient to $0.0$, relying on the policy clipping mechanism ($c=0.4$) for trust-region constraints.
\textbf{Sampling.} For each prompt, we generate $K=8$ trajectories with a sampling temperature of $1.0$ to encourage exploration. The maximum response length is set to $3{,}000$ tokens to accommodate long COT reasoning.
\textbf{PMPO Specifics.} For our adaptive mechanism, we constrain the power-mean exponent $p$ to the range $[0.01, 0.99]$ and set the reliability threshold for ESS matching to $\epsilon_{\mathrm{ess}}=0.1$.


\end{document}